\newcolumntype{C}{>{\centering\arraybackslash}X}
\newtheorem{definition}{Definition} % definition in amsmath
\NewDocumentCommand{\william}{ mO{} }{\textcolor{red}{\textsuperscript{\textit{William}}\textsf{\textbf{\small[#1]}}}}
\NewDocumentCommand{\vicki}{ mO{} }{\textcolor{blue}{\textsuperscript{\textit{Vicki}}\textsf{\textbf{\small[#1]}}}}
\NewDocumentCommand{\manling}{ mO{} }{\textcolor{Blue}{\textsuperscript{\textit{Manling}}\textsf{\textbf{\small[#1]}}}}
\NewDocumentCommand{\heng}{ mO{} }{\textcolor{cyan}{\textsuperscript{\textit{Heng}}\textsf{\textbf{\small[#1]}}}}
\NewDocumentCommand{\hh}{ mO{} }{\textcolor{green}{\textsuperscript{\textit{HH}}\textsf{\textbf{\small[#1]}}}}
\title{Skyformer: Remodel Self-Attention with Gaussian Kernel and Nystr\"om Method}
\author{
Yifan Chen\thanks{\; Equal contribution. },~   
Qi Zeng\footnotemark[1],~ 
Heng Ji,~ 
Yun Yang \\ 
University of Illinois Urbana-Champaign \\
% \email{\texttt{\{yifanc10, qizeng2,  hengji, yy84\}@illinois.edu }}
\texttt{\{yifanc10, qizeng2,  hengji, yy84\}@illinois.edu }
}
\begin{document}

\maketitle

\begin{abstract}

Transformers are expensive to train due to the quadratic time and space complexity in the self-attention mechanism. On the other hand, although kernel machines suffer from the same computation bottleneck in pairwise dot products, several approximation schemes have been successfully incorporated to considerably reduce their computational cost without sacrificing too much accuracy. In this work, we leverage the computation methods for kernel machines to alleviate the high computational cost and introduce Skyformer, which replaces the softmax structure with a Gaussian kernel to stabilize the model training and adapts the \nystrom method to a  non-positive semidefinite matrix to accelerate the computation. We further conduct theoretical analysis by showing that the matrix approximation error of our proposed method is small in the spectral norm. Experiments on Long Range Arena benchmark show that the proposed method is sufficient in getting comparable or even better performance than the full self-attention while requiring fewer computation resources.

\end{abstract}

\section{Introduction}
\label{sec:intro}

The cost of language model training increases exponentially.
Among different models, Transformer-based language models ~\citep{DBLP:conf/nips/VaswaniSPUJGKP17, DBLP:journals/corr/abs-1810-04805, DBLP:journals/corr/abs-1907-11692, DBLP:conf/acl/LewisLGGMLSZ20} are shown to enjoy state-of-the-art (SOTA) performances on many Natural Language Processing (NLP) tasks despite their enormous training cost. %  that was unthinkable years ago
% For example, it may theoretically take around $355$ years to train GPT-3~\citep{DBLP:conf/nips/BrownMRSKDNSSAA20} on a Tesla V100.
One of the computation bottlenecks lies in the self-attention mechanism, which is known to be resource-intensive with quadratic time and space complexity ($O(n)$ where $n$ is the input sequence length).
Consequently, Transformers cannot support long sequence processing and large batch size with limited resources.

The challenge of improving computational efficiency of Transformers has motivated several recent studies on attention acceleration, using either sparse attention pattern \citep{DBLP:conf/emnlp/QiuMLYW020, DBLP:journals/corr/abs-1904-10509, DBLP:conf/nips/ZaheerGDAAOPRWY20, DBLP:journals/corr/abs-2004-05150, DBLP:conf/iclr/KitaevKL20} or low-rank approximation \citep{DBLP:journals/corr/abs-2009-14794, DBLP:journals/corr/abs-2006-04768}. 
However, there is usually a lack of theoretical analysis on the approximation error of these methods due to the complex softmax structure, which makes the theoretical comparison between the efficiency of each method infeasible.
It is also unclear in theory how to set the hyper-parameters of those methods to attain a desired level of approximation accuracy.

Another issue of Transformers is the training instability that small perturbations in parameter updates tend to be amplified, resulting in significant disturbances in the model output~\citep{DBLP:conf/emnlp/LiuLGCH20}.
Transformers on some NLP tasks have shown to be sensitive to hyper-parameters, learning schedulers, or even random seeds, which usually demands a time-costly grid search for the best configuration in real-world applications.
It has also been observed in our experiments that a slight change in the learning rate may cause the failure of convergence for some models.
We conjecture that the instability in Transformer training comes from the softmax structure,
as the un-normalized attention score matrices before softmax tend to have extremely large condition numbers due to its fast singular value decay.
% In addition to the high computational complexity, another potential issue of Transformer is its difficult training \citep{DBLP:conf/emnlp/LiuLGCH20}.
% -------------------------------------

To alleviate the instability issue, an extra factor of $1 / \sqrt{p}$ in the softmax kernel $\text{SM}$ is suggested by \citet{DBLP:conf/nips/VaswaniSPUJGKP17} to restrain the scale variation;
\citet{DBLP:conf/emnlp/LiuLGCH20} proposes a new scheme to control the magnitude of output change and stabilize the training in early stages.
In practice, we also need to consider the lower numerical precision of GPU implementation in model training, which further deteriorates the stability.
% This issue substantially influences the usage of \nystrom method, which requires to inverse an ill-conditioned matrix $\mtx{S}^T \mtx{A} \mtx{S}$.
% The large condition number of $\mtx{S}^T \mtx{A} \mtx{S}$ results in extremely large spectral norm of the pseudoinverse $(\mtx{S}^T \mtx{A} \mtx{S})^\dagger$,
% which causes large parameter updates and make the training process even more unstable.

Kernel methods may be the answer to both challenges.
As pointed out by \citet{DBLP:journals/corr/abs-2009-14794}, 
the softmax structure is closely related to Gaussian kernels up to diagonal matrix multiplications,
as the pairwise dot products naturally appear when expanding the squared $\ell_2$ distance.
We further notice some important connections between self-attention and Gaussian kernels.
First, the un-normalized attention score matrix can be formed via basic matrix operations on an empirical Gaussian kernel matrix.
Moreover, the form of Gaussian kernels has the natural interpretation of  assigning ``attention'' to different tokens.
Compared to the softmax function, Gaussian kernels automatically perform the normalization as softmax does (c.f. Section~\ref{sec:kernel_attn}).
These observations motivate us to replace the softmax structure with Gaussian kernels.
As we demonstrated in this paper, the new attention model, \textbf{Kernelized Attention}, empirically stabilizes the model training while being comparable to self-attention in model accuracy.
% and theoretically allows a more precise analysis considering the abundant studies on kernel methods. 

To further improve the efficiency, we propose \textbf{Skyformer} (\textbf{S}ymmetrization of \textbf{K}ernelized attention for N\textbf{Y}str\"om method) to accelerate kernelized attention.
Skyformer adapts the \nystrom method \citep{williams2001using, drineas2005nystrom} to the non-PSD empirical Gaussian kernel matrix
(as query matrices in general do not equal to key matrices), 
by instead lifting the kernelized attention score matrix into a large PSD matrix that contains the un-normalized attention score matrix as the off-diagonal block.
We further conduct theoretical analysis by showing that Skyformer has a small matrix approximation error on kernelized attention in the spectral norm.
Our experiments on the LRA benchmark show that Skyformer consistently uses less space and time while achieving better accuracy than other baseline methods.
% In addition, our method can be easily applied to some new variants of attention, such as Lambda Network~\citep{lambdanetwork}. 

In summary, our main contributions are:

(1) We revisit the intrinsic connection between self-attention and kernel methods, and explore a new kernel-based structure, kernelized attention, to stabilize the training of Transformers.

(2) We propose Skyformer, which approximates the kernelized attention via low dimensional randomized sketches by adapting the \nystrom method to a non-PSD matrix. 
We provide the theoretical guarantee that the matrix multiplication error is small in term of spectral norm.

(3) Extensive experiments show that Skyformer achieves comparable performance to the original self-attention with fewer computational costs.\footnote{Our code is released at \url{https://github.com/pkuzengqi/Skyformer}}

\section{Related Work}
\label{sec:related_work}

% \subsection{Relate efficient transformers to kernel methods}

Among all the transformer acceleration methods, including attention layer simplification by pruning redundant attention heads~\citep{DBLP:conf/acl/VoitaTMST19, DBLP:conf/nips/MichelLN19} and
model size reduction with knowledge distillation~\citep{DBLP:conf/emnlp/JiaoYSJCL0L20, DBLP:journals/corr/abs-1903-12136, DBLP:conf/acl/LiuZWZDJ20}, 
we focus on attention approximation models, which are closely related to kernel methods.

% \textbf{Pattern-based Attention Approximation. }  
To reduce the time and space complexity by avoiding exhaustive computation over the attention metric, recent studies propose to apply sparse attention patterns to limit the numbers of elements participating in matrix multiplications~\citep{DBLP:conf/emnlp/QiuMLYW020, DBLP:journals/corr/abs-1904-10509, DBLP:conf/nips/ZaheerGDAAOPRWY20, DBLP:journals/corr/abs-2004-05150}. 
Beyond limiting the attention to fixed patterns, some approaches learn the patterns by determining
token assignments to relevant groups ~\citep{DBLP:conf/iclr/KitaevKL20, DBLP:journals/tacl/RoySVG21}. 
Those models utilize local and global information in the attention score matrix to perform approximation,
which coincides with the attempt to accelerate the computation in Gaussian processes \citep{snelson2007local}.

% \textbf{Low-Rank Attention Approximation. } 
The attention score matrix is known to exhibit a very fast rate of singular value decay \citep{bhojanapalli2020low, dong2021attention}, similar to that of an empirical kernel matrix \citep{yang2017randomized}.
This near singular property motivates many low-rank attention approximation methods to skillfully leverage the computation techniques in kernel methods.
Among them, Linformer~\citep{DBLP:journals/corr/abs-2006-04768} compresses the size of the key and value matrix with random projections based on the Johnson–Lindenstrauss transform, a common randomized sketching method in Gaussian processes \citep{yang2017randomized};
Reformer~\citep{DBLP:conf/iclr/KitaevKL20} applies locality-sensitive hashing (LSH) \citep{har2012approximate} to simplify the computation of the attention score matrix,
which is widely used in kernel density estimation \citep{charikar2017hashing, DBLP:conf/nips/BackursIW19};
Performer~\citep{DBLP:journals/corr/abs-2009-14794} projects both query and key matrix through random Fourier features~\citep{rahimi2007random},
heavily exploiting Bochner Theorem for stationary kernels. 

The most related papers to ours are linear attention~\citep{katharopoulos2020transformers}, Synthesizer~\citep{DBLP:journals/corr/abs-2005-00743}, and Nystr\"omformer~\citep{DBLP:journals/corr/abs-2102-03902}.
Linear attention takes the softmax structure in self-attention as a measure of similarity and replaces it with the dot product of separately activated query and key matrices;
Synthesizer aims to modify the original self-attention by replacing the dot product before softmax with Synthetic Attention, which generates the alignment matrix independent of token-token dependencies.
Their attempts indicate that the softmax structure in self-attention is not the only feasible choice, and justify our usage of kernelized attention.
Rather than remodeling self-attention, Nystr\"omformer applies the \nystrom method \citep{williams2001using, drineas2005nystrom}, a powerful and effective method for large-scale kernel machines acceleration, to approximate the attention score matrix.
However, Nystr\"omformer applies the \nystrom method to a non-PSD matrix, and thus fails to utilize the full potential of the \nystrom method.
This issue is resolved in our proposed Skyformer by instead lifting the kernelized attention score matrix into a large PSD matrix which contains the target non-PSD matrix as its off-diagonal block.
For more details on attention approximation methods,
we refer readers to a survey paper on efficient transformers~\citep{DBLP:journals/corr/abs-2009-06732}.

\iffalse
% Note that some models are not listed here because they do not aim to approximate the original self-attention, such as .

\subsection{\nystrom Method for Empirical Kernel Matrix Approximation}

% \william{Plan to combine the two subsections. Introduce a former paper, following a kernel method paper.}

Previous studies have shown the efficiency of \nystrom method~\citep{} in empirical kernel matrix approximation, such as such as Recursive Sampling for the Nyström Method~\citep{DBLP:conf/nips/MuscoM17} and ---more---.
% Our method mainly utilize the \nystrom method~\citep{}, which is very efficient for empirical kernel matrix approximation.
% There are a bundle of studies on its usage in kernel methods, such as Recursive Sampling for the Nyström Method~\citep{DBLP:conf/nips/MuscoM17}, and ---more---
% two paper from aistat\citep{chen2021fast, chen2021accumulations}.

% We will come back to the important property in the following sections.

\fi
\section{Preliminaries and notations}

\subsection{Revisiting self-attention}

For a given input sequence $\mtx{X} \in \mb R^{n \times d_0}$ of length $n$ and embedding dimension $d_0$,
The dot-product attention for a single head in Transformer~\cite{DBLP:conf/nips/VaswaniSPUJGKP17} is defined as
\begin{equation}
    \nonumber
    \text{Attention}(\mtx{Q},\mtx{K},\mtx{V}) = \text{softmax}\left(\frac{\mtx{Q} \mtx{K}^{T}}{\sqrt{p}}\right) \mtx{V}
\end{equation}
where $\mtx{Q} = \mtx{X} \mtx{W}_Q$, $\mtx{K} = \mtx{X} \mtx{W}_K$, and $\mtx{V} = \mtx{X} \mtx{W}_V$,
and $\mtx{W}_Q$, $\mtx{W}_K$ and $\mtx{W}_V$ are the query, key, and value weight metrics that linearly project the input $\mtx X$ of $d_0$ dimension to an output tensor of $p$ dimensions.

To simplify the future analysis, the left softmax term can be rewritten into $\mtx{D}^{-1} \mtx{A}$,
where $\mtx{A} \defeq \exp(\mtx{Q} \mtx{K}^{T} / \sqrt{p})$ is the un-normalized attention score matrix; 
$\mtx{D}$ is a diagonal matrix whose diagonal is $\exp(\mtx{Q} \mtx{K}^{T} / \sqrt{p}) \cdot \mtx{1}$ (by convention $\mtx{1}$ is a size-$n$ vector with all elements being $1$).
Following the notation in Performer \citep{DBLP:journals/corr/abs-2009-14794},
we define $\text{SM}(\mtx{q}, \mtx{k}) \defeq \exp(\mtx{q}^T \mtx{k} / \sqrt{p})$ as the softmax kernel function,
and represent $\mtx{A}$ by the notation $\text{SM}(\mtx{Q}, \mtx{K})$, which means the element $a_{ij}$ from the $i$-th row and $j$-th column in $\mtx{A}$ is equal to $\text{SM}(\mtx{q}_i, \mtx{k}_j)$.
Throughout this paper $\mtx{q}_i$ (resp. $\mtx{k}_j$) means the $i$-th (resp. $j$-th) row in $\mtx{Q}$ (resp. $\mtx{K}$).

We close this subsection with a short lemma to show $\text{SM}(\cdot, \cdot)$ is a positive semidefinite (PSD) kernel function \citep[Definition~12.6]{wainwright2019high} by relating it to Gaussian kernels.
\begin{lem}
$\text{SM}(\cdot, \cdot)$ is a PSD kernel function. 
Equivalently, for all integers $n \geq 1$ and elements $\{\mtx{q}_i\}_{i=1}^n \subseteq \mb R^{p}$,
the $n$-by-$n$ matrix $\mtx{C} = \text{SM}(\mtx{Q}, \mtx{Q})$ is PSD.
\end{lem}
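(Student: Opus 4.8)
The plan is to prove that $\text{SM}$ is PSD by exhibiting the matrix $\mtx{C} = \text{SM}(\mtx{Q},\mtx{Q})$ as a positive diagonal rescaling of a Gaussian kernel matrix, which is already known to be PSD. First I would apply the polarization identity $\mtx{q}_i^T \mtx{q}_j = \tfrac12\bigl(\|\mtx{q}_i\|^2 + \|\mtx{q}_j\|^2 - \|\mtx{q}_i - \mtx{q}_j\|^2\bigr)$ inside the exponent, so that each entry factors as
\begin{equation}
\nonumber
c_{ij} = \exp\!\Bigl(\tfrac{\mtx{q}_i^T \mtx{q}_j}{\sqrt{p}}\Bigr) = \exp\!\Bigl(\tfrac{\|\mtx{q}_i\|^2}{2\sqrt{p}}\Bigr)\,\exp\!\Bigl(-\tfrac{\|\mtx{q}_i - \mtx{q}_j\|^2}{2\sqrt{p}}\Bigr)\,\exp\!\Bigl(\tfrac{\|\mtx{q}_j\|^2}{2\sqrt{p}}\Bigr).
\end{equation}
Writing $\mtx{\Lambda} = \operatorname{diag}\bigl(\exp(\|\mtx{q}_i\|^2/(2\sqrt{p}))\bigr)$ and letting $\mtx{G}$ be the Gaussian kernel matrix with entries $G_{ij} = \exp(-\|\mtx{q}_i - \mtx{q}_j\|^2/(2\sqrt{p}))$, this reads $\mtx{C} = \mtx{\Lambda}\,\mtx{G}\,\mtx{\Lambda}$. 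PSD-ness of $\mtx{C}$ then follows immediately from that of $\mtx{G}$ by the congruence argument: for any $\mtx{v} \in \mb R^n$ we have $\mtx{v}^T \mtx{C}\, \mtx{v} = (\mtx{\Lambda}\mtx{v})^T \mtx{G} (\mtx{\Lambda}\mtx{v}) \geq 0$, since $\mtx{\Lambda}$ is a real diagonal matrix. Because this holds for every $n$ and every point set $\{\mtx{q}_i\}_{i=1}^n$, it establishes that $\text{SM}(\cdot,\cdot)$ is a PSD kernel in the sense of the cited definition, giving exactly the stated equivalence.

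A fully self-contained alternative I would keep in reserve avoids invoking the Gaussian kernel altogether and instead uses the Schur product theorem together with the Taylor expansion of the exponential. Here one notes that $M_{ij} \defeq \mtx{q}_i^T \mtx{q}_j$ is a Gram matrix and hence PSD; the rescaled matrix $M/\sqrt{p}$ is therefore PSD as well, and by the Schur product theorem each Hadamard power $(M/\sqrt{p})^{\circ m}$ is PSD. Expanding $c_{ij} = \sum_{m \geq 0} \tfrac{1}{m!}(\mtx{q}_i^T\mtx{q}_j/\sqrt{p})^m$ exhibits $\mtx{C}$ as an entrywise-convergent nonnegative combination of these PSD Hadamard powers (the $m=0$ term being the rank-one all-ones matrix), and since nonnegative combinations and entrywise limits of PSD matrices remain PSD, the claim follows.

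The only genuinely nontrivial ingredient in the primary route is the PSD-ness of the Gaussian kernel $\mtx{G}$, which I would either cite as a standard fact or derive from Bochner's theorem (the Gaussian is the Fourier transform of a nonnegative measure) or from an explicit feature-map expansion. I expect this to be the main obstacle in the sense that it is the one step not reducible to elementary linear algebra; everything else is the algebraic factorization and the quadratic-form inequality. The Schur-product argument is attractive precisely because it sidesteps this dependency and is elementary modulo the Schur product theorem, so I would present the Gaussian factorization as the main line (consistent with the paper's stated framing) and mention the Taylor/Schur argument as a remark for readers who prefer a self-contained derivation.
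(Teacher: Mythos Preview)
Your primary argument is correct and is essentially identical to the paper's proof: the paper derives the same factorization $\mtx{C} = \mtx{D}_Q^{1/2}\,\kappa(\mtx{Q}/p^{1/4},\mtx{Q}/p^{1/4})\,\mtx{D}_Q^{1/2}$ (your $\mtx{\Lambda}\,\mtx{G}\,\mtx{\Lambda}$) and then appeals to the PSD-ness of the Gaussian kernel. Your Schur--product alternative is a valid self-contained addition not present in the paper, but the main line matches exactly.
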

\begin{proof}
We first state an important equation to connect the softmax kernel and Gaussian kernels as follows:
\begin{align*}
\text{SM}(\mtx{q}_i, \mtx{q}_j) = \exp\left(\frac{\mtx{q}_i^T \mtx{q}_j}{\sqrt{p}}\right) 
= \exp\left(\frac{\|\mtx{q}_i\|^2}{2 \sqrt{p}}\right) \exp\left(-\frac{\|\mtx{q}_i - \mtx{q}_j\|^2}{2 \sqrt{p}}\right) \exp\left(\frac{\|\mtx{q}_j\|^2}{2 \sqrt{p}}\right).
\end{align*}
The middle part $\exp\left(\frac{\|\mtx{q}_i - \mtx{q}_j\|^2}{2 \sqrt{p}}\right)$ is exactly a Gaussian kernel with bandwidth $p^{\frac14}$.
(\citet{DBLP:journals/corr/abs-2009-14794} have more discussion on the findings.)

Through this equation, we can rewrite $\mtx{C}$ as
\begin{align}
\label{eqn:sm_rbf}
\mtx{C} = \mtx{D}_Q^{1/2} \cdot \kappa\left(\frac{\mtx{Q}}{p^{1/4}}, \frac{\mtx{Q}}{p^{1/4}} \right) \cdot \mtx{D}_Q^{1/2},
\end{align}
where $\mtx{D}_Q$ is a diagonal matrix with elements $(\mtx{D}_Q)_{ii} = \exp\left(\frac{\|\mtx{q}_i\|^2}{\sqrt{p}}\right), \forall i \in [n]$,
and $\kappa(\mtx{q}_i, \mtx{q}_j) \defeq \exp\left(-{\|\mtx{q}_i - \mtx{q}_j\|^2}/{2}\right)$ is the standard Gaussian kernel function.

We prove the lemma by using the fact that $\kappa$ is a PSD kernel and $\kappa\left(\frac{\mtx{Q}}{p^{1/4}}, \frac{\mtx{Q}}{p^{1/4}} \right)$ is a PSD matrix.
\end{proof}

\subsection{\nystrom method}

Due to the intrinsic low-rankness of an empirical kernel matrix $\mtx{B}$, the so-called \nystrom method that replaces $\mtx{B}$ with its low-rank approximation $\tilde{\mtx{B}}$, has been applied to accelerate kernel methods \citep{gittens2016revisiting, kumar2009sampling, williams2001using}. 
% statistical dimension (eigenvalue decaying rate of empirical kernel matrix.)
Specifically, the \nystrom approximation of $\mtx{B}$ is the matrix $\tilde{\mtx{B}} = \mtx{B} \mtx{S}(\mtx{S}^T \mtx{B} \mtx{S})^\dagger \mtx{S}^T \mtx{B}$,
where $(\cdot)^\dagger$ denotes the Moore-Penrose pseudoinverse of a matrix, and $\mtx{S} \in \mb R^{n \times d}$ is a zero-one sub-sampling matrix whose columns are a subset of the columns in $\mtx{I}$, indicating which $d$ observations have been selected. 
The formal definition of the uniform sub-sampling matrix is given as follows:
\begin{definition}[Uniform sub-sampling matrix]
\label{def:subsampling}
% Consider a uniform discrete distribution which draws $i$ with probability $\frac1n$.
For a random matrix $\mtx{S} \in \mb R^{n \times d}$, if $\mtx{S}$ has i.i.d. columns and the $j$-th column $\mtx{S}^{(j)}$ can randomly be $\sqrt{\frac{1}{d}} \mtx{e}_i$ with probability $\frac1n$,
where $\mtx{e}_i$ is the $i$-th column of the $n$-by-$n$ identity matrix $\mtx{I}_n$, 
then $\mtx{S}$ is called a uniform sub-sampling.
\end{definition}

We close this subsection with a remark that it is not appropriate to directly extend the \nystrom method from kernel method to self-attention due to a core requirement that $\mtx{B}$ should be PSD with consideration of approximation performance improvement.
% , which is critical in improving the approximation performance.
We will show in the next section how to address this challenge and properly adapt \nystrom method to non-PSD matrices.

% \heng{maybe it would be good to add a table to summarize the pros and cons of these methods and your solution}

\subsection{Approximation evaluation}

Beyond the time and space complexity, attention acceleration methods have been mostly evaluated with empirical experiment results, such as the perplexity of pretrained language models and the fine-tuned performance on downstream natural language understanding tasks. 
Specifically, Long Range Arena benchmark~\citep{DBLP:journals/corr/abs-2011-04006} has been proposed to systematically evaluate the performance of efficient transformers with ten NLP tasks in long-context scenarios.
However, such empirical results are indirect for theoretical analysis.
% Many different methods have been proposed to approximate the self-attention.
% However, some of the existing methods solely evaluate the approximation performance by the downstream transformer model accuracy, 
% which is indirect and hard to perform theoretical analysis.
Therefore, we introduce a common criterion used in matrix approximation, spectral norm, to ease the future discussion on performance.
\begin{definition}[Spectral norm guarantee for matrix approximation (MA)]
\label{def:ma}
Given a matrix $\mtx{M} \in \mb R^{n_1 \times n_2}$, two constants $\varepsilon > 0, \delta < \frac12$, we say that its approximation matrix $\wt{\mtx{M}} \in \mb R^{n_1 \times n_2}$ satisfies $(\varepsilon, \delta)$-MA property for $\mtx{M}$, if
\begin{align}
\Prob{\|\mtx{M} - \wt{\mtx{M}}\| > \varepsilon \|\mtx{M}\|} < \delta.
\end{align}
\end{definition}

In previous works, the direct analysis of the approximation error to the entire output $\mtx{D}^{-1} \mtx{A} \mtx{V}$ in the $(\varepsilon, \delta)$-MA manner is usually spared due to the difficulty caused by the complex softmax structure.
In this paper, with the new kernelized attention, we are allowed to perform the analysis through the existing theoretical results in kernel methods.
Consequently, in Section~\ref{sec:norm} we are able to give a relatively precise error analysis on the approximation of Skyformer to the entire kernelized attention,
which eases the future comparison with other methods approximating kernelized attention.
% focus on the un-normalized attention score matrix $\mtx{A}$ study the $(\varepsilon, \delta)$-MA property for ,
% and from this perspective we are give precisely in Section~\ref{sec:related_work}.
% Concretely, in Section~\ref{sec:norm} we evaluate the approximation performance for the whole self-attention $\mtx{D}^{-1} \mtx{A} \mtx{V}$ from real BERT models.
\section{Method}
\label{Sec:Method}

\subsection{Kernelized Attention}
\label{sec:kernel_attn}

Kernelized Attention replaces the softmax structure in vanilla self-attention with a Gaussian kernel,
and the new attention model is stated as:
\begin{align}
\text{Kernelized-Attention}(\mtx{Q},\mtx{K},\mtx{V}) = \mtx{C} \mtx{V} \defeq \kappa\left(\frac{\mtx{Q}}{p^{1/4}}, \frac{\mtx{K}}{p^{1/4}} \right) \mtx{V},
\end{align}
where we define the $n$-by-$n$ matrix $\mtx{C}$ as the kernelized attention score matrix $\kappa(\mtx{Q} / p^{1/4}, \mtx{K} / p^{1/4})$.

The justification for using the kernelized attention model is as follows.
A significant advantage of softmax attention is that tokens are allowed to attend to a limited number of other important tokens in the sequence.
We observe that Gaussian kernel function can play a similar role.
The expression of a Gaussian kernel is $\kappa(\mtx{q}_i,  \mtx{k}_j) \defeq \exp \left(-\|\mtx{q}_i - \mtx{k}_j\|^2 / 2 \right)$.
Via this expression, for token $i$ in the query, Gaussian kernel assigns a large attention score to the token $j$ when $\mtx{k}_j$ is close to $\mtx{q}_i$.
The distance-based weight assignment is indeed considered as a major reason why kernel methods are powerful.
The form of kernelized attention also leads to an automatic normalization. 
Based on Equation~(\ref{eqn:sm_rbf}), the new attention model can be rewritten in terms of the un-normalized attention score matrix $\mtx{A}$ as
\begin{align*}
\text{Kernelized-Attention}(\mtx{Q},\mtx{K},\mtx{V}) = \left( \mtx{D}_Q^{-1/2} \cdot \mtx{A} \cdot \mtx{D}_K^{-1/2} \right) \mtx{V},
\end{align*}
where $\mtx{D}_Q$ (resp. $\mtx{D}_K$) is a diagonal matrix with elements $(\mtx{D}_Q)_{ii} = \exp\left( \frac{\|\mtx{q}_i\|^2}{\sqrt{p}} \right)$ 
(resp. $(\mtx{D}_K)_{ii} = \exp \left( \frac{\|\mtx{k}_i\|^2}{\sqrt{p}} \right)$), $\forall i \in [n]$.
We remark the kernelized attention model can thus be formally taken as a variant of the original self-attention, 
which instead normalizes the matrix $\mtx{A}$ in a form of $\mtx{D}^{-1} \mtx{A}$.
The intrinsic normalization allows kernelized attention to have a more reasonable condition number than self-attention, 
which benefits the stability of model training.
To demonstrate the improvement in stability, we additionally provide a toy experiment in Appendix~\ref{sec:exp_stability},
which shows the ``condition number" of kernelized attention is smaller than self-attention.
Moreover, empirical evaluation in Section~\ref{sec:exp} supports our claim that the new attention model can attain a comparable performance to the original attention model.

\subsection{Skyformer: a modified \nystrom method}
\label{sec:nystrom}

Before jumping into details of Skyformer, 
we first propose a method to apply \nystrom method to approximate an asymmetric (and thus non-PSD) empirical kernel matrix $\mtx{B}$ constructed with any PSD kernel $\phi(\cdot, \cdot)$.
Specifically, with two different $n$-by-$p$ design matrices $\mtx{Q}$ and $\mtx{K}$,
its element $b_{ij}$ from the $i$-th row and $j$-th column in $\mtx{B}$ is equal to $\phi(\mtx{q}_i, \mtx{k}_j)$,
where $\mtx{q}_i$ (resp. $\mtx{k}_j$) is the $i$-th (resp. $j$-th) row in $\mtx{Q}$ (resp. $\mtx{K}$).
We remark this type of empirical kernel matrices involves the un-normalized attention score matrix $\mtx{A} \defeq \text{SM}(\mtx{Q}, \mtx{K})$,
and the empirical Gaussian kernel matrix $\mtx{C} \defeq \kappa(\mtx{Q} / p^{1/4}, \mtx{K} / p^{1/4})$.
% (The proof to show $\kappa(\mtx{q}_i, \mtx{k}_j) = \exp(\mtx{q}_i^T \mtx{k}_j / \sqrt{p})$ is a PSD kernel function is deferred to Appendix~\ref{sec:facts}.)
Therefore this method leads to a low-rank approximation to the output of either self-attention $\mtx{D}^{-1} \mtx{A} \mtx{V}$ or Kernelized Attention $\mtx{C} \mtx{V}$.
($\mtx{D}$ in self-attention can be obtained by computing $\mtx{A} \cdot \mtx{1}$, and thus a low-rank approximation to $\mtx{A}$ also implies an approximation to $\mtx{D}$.)
% with a low dimensional randomized sketch of the complete symmetric matrix $\bar{\mtx{A}}$.

% Figure~\ref{fig:model} shows our proposed method. 
% mark: delete figure

Computational details are stated as follows.
To tackle the challenge of approximating a non-PSD matrix $\mtx{B}$, our first step is to complete the matrix into a PSD matrix $\bar{\mtx{B}}$:
\begin{align}
\label{eqn:concat}
    \bar{\mtx{B}} \defeq \phi \left(
        \begin{pmatrix}
        \mtx{Q}  \\
        \mtx{K} 
        \end{pmatrix},
        \begin{pmatrix}
        \mtx{Q}  \\
        \mtx{K} 
        \end{pmatrix} \right).
\end{align}
Then we approximate $\bar{\mtx{B}}$ with $\tilde{\bar{\mtx{B}}}$ through
\begin{align}
\label{eqn:tilde_bar}
\tilde{\bar{\mtx{B}}} = \bar{\mtx{B}} \mtx{S} (\mtx{S}^{T} \bar{\mtx{B}}\textbf{S})^{\dagger} \textbf{S}^{T}\bar{\mtx{B}},
\end{align}
where $\mtx{S}$ is a $2n$-by-$d$ uniform sub-sampling matrix as defined in Definition~\ref{def:subsampling}.
The final approximation will be given as 
\begin{align}
\label{eqn:approx}
\tilde{\mtx{B}} \defeq (\mtx{I}, \mtx{0}) \tilde{\bar{\mtx{B}}} (\mtx{0}, \mtx{I})^T.
\end{align}
The original matrix $\mtx{B}$ can be well-approximated by $\tilde{\mtx{B}}$ due to the following inequality
\begin{align*}
\|\mtx{B} - \tilde{\mtx{B}}\| = \|(\mtx{I}, \mtx{0}) (\bar{\mtx{B}} - \tilde{\bar{\mtx{B}}}) (\mtx{0}, \mtx{I})^T\| \leq \|\bar{\mtx{B}} - \tilde{\bar{\mtx{B}}}\|,
\end{align*}
and thus we show our task of approximating the non-PSD matrix $\mtx{B}$ boils down to well approximating the PSD matrix $\bar{\mtx{B}}$.

% \william{We empirically show the performance of the modified \nystrom method on approximation error in Figure~\ref{},}
% through comparing the method with other approximation methods used by existing efficient Transformers.
% The complete settings can be found in Appendix~\ref{sec:exp_norm}.

\textbf{Remark.} 
The reason why $\mtx{B}$ can be well approximated by a low-rank $\tilde{B}$ is that as an empirical kernel matrix the eigenvalues in $\bar{\mtx{B}}$ usually decay fast,
and thus there are many small eigenvalues in the long tail. 
In this case, theoretically a low-rank matrix (e.g. truncated singular value decomposition (SVD) of $\bar{\mtx{B}}$) has enough potential to well approximate the original matrix $\bar{\mtx{B}}$ (and $\mtx{B}$ accordingly) in terms of spectral norm.

With the derivation above, we officially introduce our proposed Skyformer as an approximation to Kernelized Attention, which applies the modified \nystrom method to the kernelized attention score matrix $\mtx{C}$.
The next two subsections will continue our discussion on it, and respectively state the theoretical analysis of its approximation error and some details of its implementation in practice.

\subsection{Error analysis of Skyformer}

As mentioned, an implicit advantage of using Kernelized Attention is that we can leverage the existing conclusions for kernel methods to analyze the theoretical properties of the model.
In this subsection, we aim to provide some theoretical analysis of its approximation error.

We state a high probability bound on the size $d$ of the sub-sampling matrix used in Skyformer to attain $(\varepsilon, \delta)$-MA property for the kernelized attention score matrix $\mtx{C}$ by the following theorem.
We refer the readers to the proof in Appendix~\ref{sec:thm_error} to take a closer look at our claim that the matrix to be approximated should be PSD is a key to the theoretical guarantee of \nystrom method.
\begin{thm}[Adapted from Lemma~9 and Theorem~3 \citep{DBLP:conf/nips/MuscoM17}]
\label{thm:tilde_K}
Consider the query, key, and value matrix $\mtx{Q}, \mtx{K}, \mtx{V} \in \mb R^{n \times p}$ and two positive constants $\varepsilon < 1, \delta < \frac12$.
For the empirical Gaussian kernel matrix $\mtx{C} \defeq \kappa(\mtx{Q} / p^{1/4}, \mtx{K} / p^{1/4})$ defined above, 
we let $\lambda \defeq \varepsilon \|\mtx{C}\| < \|\bar{\mtx{C}}\|$, where $\bar{\mtx{C}}$ is the completion of $\mtx{C}$ (similar to $\bar{\mtx{B}}$, constructed as substituting the Gaussian kernel with bandwidth $p^{1/4}$ for the arbitrary kernel function $\phi$ in Equation~(\ref{eqn:concat})).
We comment $\lambda$ serves as the regularization coefficient as well as the approximation error bound. 
To ease the analysis, we specifically define the $i$-th diagonal element of $\bar{\mtx{C}} (\bar{\mtx{C}} + \lambda \mtx{I}_{2n})^{-1}$ as leverage score $\ell_i, \forall i = 1,\dots, 2n$, 
and define their sum $\Tr \big( \bar{\mtx{C}} (\bar{\mtx{C}} + \lambda \mtx{I})^{-1} \big)$ as the statistical dimension $d_{stat}$,
which increases with $1 / \varepsilon$ as $\lambda \propto \varepsilon$. 
Suppose $\mtx{S}$ is a uniform sub-sampling matrix, 
and assume there exists a constant $\beta \in (0, 1]$ such that $\beta \leq \frac{d_{stat}}{2n \ell_i}, \forall i = 1,\dots,2n$. 
For the approximation matrix $\tilde{\bar{\mtx{C}}}$ constructed with $\bar{\mtx{C}}, \mtx{S}$ as in Equation~(\ref{eqn:tilde_bar}), 
there exists a constant $C$ such that if
\begin{align*}
d \geq C \frac{d_{stat}}{\beta} \log \frac{n}{\delta}
\end{align*}
then $\tilde{\bar{\mtx{C}}} \psdle \bar{\mtx{C}} \psdle \tilde{\bar{\mtx{C}}} + \lambda \mtx{I}$ with probability $1-\delta$.
Here $\psdle$ denotes the Loewner ordering: $\mtx{B} \psdle \mtx{A}$ means $\mtx{A} - \mtx{B}$ is positive semidefinite.
Furthermore, for our approximation $\tilde{\mtx{C}}$ in Equation~(\ref{eqn:approx}) to the kernelized attention score $\mtx{C}$, we have 
\begin{align*}
\|\tilde{\mtx{C}} - \mtx{C}\| \leq \lambda = \varepsilon \|\mtx{C}\|.
\end{align*}
\end{thm}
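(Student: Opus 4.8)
My plan is to split the statement into its two components: the two-sided Loewner sandwich $\tilde{\bar{\mtx{C}}} \psdle \bar{\mtx{C}} \psdle \tilde{\bar{\mtx{C}}} + \lambda \mtx{I}$, which is a spectral (projection-form) approximation guarantee, and the final spectral-norm bound $\|\tilde{\mtx{C}} - \mtx{C}\| \le \lambda$ on the off-diagonal block. I would obtain the sandwich by invoking the ridge-leverage-score N\"ystrom guarantee of \citet{DBLP:conf/nips/MuscoM17}, then derive the norm bound from it by elementary linear algebra, and close with the block-restriction inequality already established above the Remark.

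First I would pin down the structural prerequisite that makes the whole machinery applicable: $\bar{\mtx{C}}$ must be PSD. This is immediate from the earlier lemma, since $\kappa$ is a PSD kernel and hence the empirical kernel matrix built from the stacked design $(\mtx{Q}^T, \mtx{K}^T)^T$ is PSD. This is exactly why we complete the non-PSD $\mtx{C}$ into $\bar{\mtx{C}}$, and it is the point on which the \nystrom theory (which requires a PSD target) hinges; without it the guarantee would not apply, which is the gap left open by methods that sketch a non-PSD matrix directly.

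Next I would reduce uniform sub-sampling to ridge-leverage-score sampling. The $\ell_i$ are the ridge leverage scores of $\bar{\mtx{C}}$ at regularization $\lambda$, their sum is $d_{stat}$, and the theoretically favorable scheme draws column $i$ with probability $\ell_i / d_{stat}$. The hypothesis $\beta \le d_{stat}/(2n\ell_i)$ says the uniform probability $1/(2n)$ is at least a $\beta$-fraction of $\ell_i/d_{stat}$ for every $i$, so uniform sampling is an oversampling of leverage-score sampling with oversampling factor $1/\beta$. Feeding this into the spectral-approximation theorem of \citet{DBLP:conf/nips/MuscoM17} gives, for $d \ge C (d_{stat}/\beta)\log(n/\delta)$ columns, the sandwich $\tilde{\bar{\mtx{C}}} \psdle \bar{\mtx{C}} \psdle \tilde{\bar{\mtx{C}}} + \lambda \mtx{I}$ with probability $1-\delta$. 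This matrix-concentration step is the technical heart and the main obstacle; but because the theorem is stated as adapted from their Lemma~9 and Theorem~3, the work here is chiefly to match our notation and the uniform-sampling-plus-$\beta$ setup to their hypotheses, rather than to reprove the concentration bound itself.

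Finally I would extract the norm bound from the sandwich. Subtracting $\tilde{\bar{\mtx{C}}}$ throughout yields $\mtx{0} \psdle \bar{\mtx{C}} - \tilde{\bar{\mtx{C}}} \psdle \lambda \mtx{I}$; since $\bar{\mtx{C}}$ and $\tilde{\bar{\mtx{C}}}$ are symmetric, $\bar{\mtx{C}} - \tilde{\bar{\mtx{C}}}$ is symmetric PSD with eigenvalues in $[0,\lambda]$, so $\|\bar{\mtx{C}} - \tilde{\bar{\mtx{C}}}\| \le \lambda$. Applying the block-restriction inequality proved earlier, $\|\mtx{C} - \tilde{\mtx{C}}\| = \|(\mtx{I},\mtx{0})(\bar{\mtx{C}} - \tilde{\bar{\mtx{C}}})(\mtx{0},\mtx{I})^T\| \le \|\bar{\mtx{C}} - \tilde{\bar{\mtx{C}}}\| \le \lambda = \varepsilon\|\mtx{C}\|$, which is the claim. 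The only mild care needed is that restricting to an off-diagonal block cannot increase the spectral norm, as the two coordinate projections each have operator norm one; this is already recorded in the displayed inequality above.
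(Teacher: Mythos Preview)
Your proposal is correct and follows the same overall architecture as the paper: PSD completion, linking the $\beta$ hypothesis to leverage-score oversampling, a concentration step yielding the Loewner sandwich, and then the block-restriction inequality for the final spectral-norm bound. The one noteworthy difference is that where you invoke \citet{DBLP:conf/nips/MuscoM17} as a black box for the sandwich, the paper instead gives a self-contained argument: it factors $\bar{\mtx{C}}=\mtx{B}\mtx{B}^T$, writes $\bar{\mtx{C}}-\tilde{\bar{\mtx{C}}}=\mtx{B}(\mtx{I}-\mtx{P_\Pi})\mtx{B}^T$ to get $\tilde{\bar{\mtx{C}}}\psdle\bar{\mtx{C}}$ for free, and then proves a dedicated matrix-Bernstein lemma showing $\mtx{\Psi}^T\mtx{\Pi}\mtx{\Pi}^T\mtx{\Psi}$ is a $(\tfrac12,\delta)$-approximation of $\mtx{\Psi}^T\mtx{\Psi}$ (with $\mtx{\Psi}\mtx{\Psi}^T=\bar{\mtx{C}}(\bar{\mtx{C}}+\lambda\mtx{I})^{-1}$, so $\|\mtx{\Psi}\|_F^2=d_{stat}$ and the stable-rank and $\beta$ conditions line up exactly), from which the upper sandwich follows after sandwiching by $\mtx{I}-\mtx{P_\Pi}$. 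Your route is shorter and perfectly legitimate given the theorem's attribution; the paper's route makes the dependence on $d_{stat}$ and $\beta$ fully explicit without relying on the reader to verify the match to the cited hypotheses.
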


This theorem implies the time and space complexity of our proposed approximation depends on the statistical dimension $d_{stat}$.
If we directly use the conclusion from Gaussian kernels, $d_{stat}$ should be $\wt{\m O}(1)$ (complexity modulo poly-log term) \citep{yang2017randomized} due to the exponential eigenvalue decay rate of Gaussian kernels,  
which is comparable to the complexity of most other efficient transformers.
However, different than the case in the classical kernel methods, the distribution of the query and key matrix $\mtx{Q}$ and $\mtx{K}$ changes during the training procedure,
which may invalidate the conclusion about $d_{stat}$.
We leave the exact non-asymptotic analysis of the computational complexity for future work.
% We also remark the complexity analysis is discussed under a high probability bound setting, 
% which is not comparable to the linear $\m O(n)$ complexity of some previous methods. 

% The training of the approximation model with \nystrom method also benefits from the reduced condition number of $\mtx{S}^T (\mtx{D}_Q^{-1/2} \mtx{A} \mtx{D}_K^{-1/2}) \mtx{S}$.
% Some empirical analysis are reported in Appendix~\ref{sec:nyst_kernel_attn} to further validate the statement above.

\subsection{Workaround in implementation}
\label{sec:method_implementation}

A potential limitation with the implementation of the proposed method lies in the tricky fact that the matrix inversion on GPU is much slower and numerically less stable than the same operation on CPU due to the different back-end libraries in the two platforms.
We attempt to circumvent the problem by adapting the strategy in Nystr\"{o}mformer \citep{DBLP:journals/corr/abs-2102-03902} to our setting.
Specifically, we use the matrix-product-based iterative method \citep{razavi2014new} for finding approximate inverses, instead of some division-based methods (such as the conjugate gradient method) which induces some instability in model training.

To apply the iterative method and inverse matrix $\mtx{M} = \mtx{S}^{T} \bar{\mtx{C}} \textbf{S}$, we need to satisfy its assumption \citep[Theorem~2]{razavi2014new} that $\|\mtx{I} - \mtx{M}\| < 1$.
In practice, we instead pass the matrix $\mtx{D}_M^{-1/2} (\mtx{M} + \gamma \mtx{I}) \mtx{D}_M^{-1/2}$ as an input to the iterative method,
where $\gamma>0$ is a small constant and the diagonal matrix $\mtx{D}_M$ is defined as $\text{diag}\left( (\mtx{M} + \gamma \mtx{I}) \mtx{1} \right)$.
We give the following lemma to justify our practical usage of the method. The proof is deferred to Appendix~\ref{sec:lem_iter}.
\begin{lem}
\label{lem:iterative}
Given a constant $\gamma > 0$, if matrices $\mtx{M}$ is constructed as $\mtx{S}^{T} \bar{\mtx{C}} \textbf{S}$, and $\mtx{D}_M$ are defined as above,
then all the singular values of $\mtx{D}_M^{-1/2} (\mtx{M} + \gamma \mtx{I}) \mtx{D}_M^{-1/2}$ are within $(0, 1)$,
which implies that $\|\mtx{I} - \mtx{D}_M^{-1/2} (\mtx{M} + \gamma \mtx{I}) \mtx{D}_M^{-1/2}\| < 1$.
\end{lem}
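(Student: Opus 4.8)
The plan is to combine the positive semidefiniteness of $\bar{\mtx{C}}$ with the strict positivity of the Gaussian kernel entries, and then reduce the whole spectral claim to a statement about a row-stochastic matrix via a similarity transform. First I would record two structural facts about $\mtx{P} \defeq \mtx{M} + \gamma \mtx{I}$. Since the Gaussian kernel $\kappa(\cdot,\cdot)$ is a PSD kernel (as used in the proof of Lemma~1), the completed matrix $\bar{\mtx{C}}$ satisfies $\mtx{0} \psdle \bar{\mtx{C}}$, hence $\mtx{M} = \mtx{S}^T \bar{\mtx{C}} \mtx{S}$ is positive semidefinite and $\mtx{P} = \mtx{M} + \gamma \mtx{I}$ is positive definite for $\gamma > 0$. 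Moreover every entry of $\bar{\mtx{C}}$ equals a value $\exp(-\|\cdot\|^2/2) > 0$ of the Gaussian kernel, so each entry of $\mtx{M}$ — being a $\tfrac1d$-scaled entry of $\bar{\mtx{C}}$ — is positive, and therefore $\mtx{P}$ has strictly positive entries. In particular each row sum $(\mtx{P}\mtx{1})_i$ is positive, so $\mtx{D}_M = \mathrm{diag}(\mtx{P}\mtx{1})$ is a positive diagonal matrix, $\mtx{D}_M^{-1/2}$ is well defined, and $\mtx{N} \defeq \mtx{D}_M^{-1/2} \mtx{P} \mtx{D}_M^{-1/2}$ is a well-defined symmetric matrix.

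For the lower bound, I would note that since $\mtx{D}_M^{-1/2}$ is invertible, $\mtx{N}$ is congruent to the positive definite matrix $\mtx{P}$, so by Sylvester's law of inertia $\mtx{N}$ is itself positive definite. Because $\mtx{N}$ is symmetric and PSD, its singular values coincide with its eigenvalues, which are therefore all strictly positive; this settles the left endpoint. For the upper bound I would pass to the similar matrix $\mtx{L} \defeq \mtx{D}_M^{-1} \mtx{P} = \mtx{D}_M^{1/2} \mtx{N} \mtx{D}_M^{-1/2}$, which shares the spectrum of $\mtx{N}$. By the very definition of $\mtx{D}_M$ we have $\mtx{L}\mtx{1} = \mtx{D}_M^{-1}(\mtx{P}\mtx{1}) = \mtx{1}$, so $\mtx{L}$ is a nonnegative row-stochastic matrix. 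A Gershgorin-disc argument (equivalently Perron--Frobenius) then forces every eigenvalue of $\mtx{L}$ to have real part at most $1$; since $\mtx{L}$ is similar to the symmetric matrix $\mtx{N}$ its eigenvalues are real, and hence the eigenvalues of $\mtx{N}$ all lie in $(0,1]$.

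To conclude, I would write the eigenvalues of $\mtx{N}$ as $\mu_1 \geq \cdots \geq \mu_m$ with each $\mu_i \in (0,1]$. Then $\mtx{I} - \mtx{N}$ is symmetric with eigenvalues $1 - \mu_i \in [0,1)$, so $\|\mtx{I} - \mtx{N}\| = \max_i |1-\mu_i| = 1 - \mu_m < 1$, which is exactly the claim needed to apply the iterative inversion method. The hard part is the upper bound: the key observation is that the symmetric normalization $\mtx{D}_M^{-1/2}(\cdot)\mtx{D}_M^{-1/2}$ is only a similarity transform away from a row-stochastic matrix whose Perron eigenvalue is exactly $1$. This is also the delicate point, since the top eigenvalue of $\mtx{N}$ equals $1$ rather than being strictly below it; the strict inequality $\|\mtx{I} - \mtx{N}\| < 1$ must therefore be extracted from the strictly positive lower bound $\mu_m > 0$ (guaranteed by positive definiteness of $\mtx{P}$) rather than from the upper bound on the spectrum.
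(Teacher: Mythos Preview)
Your argument is correct and reaches the same conclusion as the paper, but the route for the upper bound is genuinely different. The paper shows directly that $\mtx{D}_M - (\mtx{M}+\gamma\mtx{I})$ is PSD via the graph-Laplacian quadratic-form identity
\[
\mtx{x}^T\big(\mtx{D}_M-(\mtx{M}+\gamma\mtx{I})\big)\mtx{x}=\tfrac12\sum_{i,j}\mtx{W}_{ij}(\mtx{x}_i-\mtx{x}_j)^2\ge 0,
\]
which immediately gives $\mtx{N}\psdle\mtx{I}$. You instead conjugate $\mtx{N}$ by $\mtx{D}_M^{1/2}$ to obtain the row-stochastic matrix $\mtx{D}_M^{-1}(\mtx{M}+\gamma\mtx{I})$ and bound its spectrum by Gershgorin/Perron--Frobenius. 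Both approaches are standard and equally short; the Laplacian computation is slightly more self-contained (no appeal to an external spectral lemma), while your row-stochastic viewpoint makes the structural reason for the bound more transparent. You are also more explicit than the paper on one point: the top eigenvalue of $\mtx{N}$ is exactly $1$ (eigenvector $\mtx{D}_M^{1/2}\mtx{1}$), so the singular values lie in $(0,1]$ rather than $(0,1)$, and the strict inequality $\|\mtx{I}-\mtx{N}\|<1$ indeed comes from the strictly positive smallest eigenvalue, not from any strict upper bound.
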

We further comment that numerically an implicit risk of the Schulz-type iterative method we use is the unintended consequence of ``zero fill-in".
If we use some sparse kernels (e.g. test functions with bounded support) other than Gaussian kernels,
the empirical kernel matrices are sparse while the approximate inverse will converge to a dense matrix,
which increases the computational cost.

\subsection{Empirical approximation evaluation}
\label{sec:norm}

Spectral norm, the maximum singular value of a matrix, is a computation-light indicator of matrix approximation performance. 
In this work, we compare the spectral norm of the difference between the outputs from attention functions and the output from vanilla self-attention with the same input.

We use the initialized and pretrained bert-base-cased models from Huggingface's implementation~\citep{DBLP:journals/corr/abs-1910-03771} .
The input vector $X$ is embedded from the tokenized raw text in Wikitext-2 dataset~\citep{DBLP:conf/iclr/MerityX0S17}.
The query, key and value weight matrices in initialized or pretrained models  transform input $X$ into $Q, K, V$ of different distributions.
We compare the results with different sequence lengths and different numbers of features used in attention approximation methods. 
We set the number of features in the range of $2^4$ to $2^8$. 
More features usually require more computation resources.
% Note that the number of features used in Big Bird is roughly ten times of \textit{block\_size} ($(r+b+g)*b$ in their notations and by default set $b$ to 64), which means it is much larger than the number of features used in other methods.

Figure~\ref{fig:norm} shows the performance of the modified Nystr\"om method on approximation error with regards to the number of features.
We conclude that for Skyformer the approximation is significantly better with the increased number of features, while for other methods the gain is not obvious.
The good performance of the modified \nystrom method also validates our previous claim that the \nystrom method is currently one of the most powerful methods in large-scale kernel machines acceleration.

\textbf{Remark.} 
Although in a single step the modified \nystrom method in Section~\ref{sec:nystrom} can give low approximation error, we do not recommend directly applying it to the original self-attention.
With some exploratory experiments on classification tasks, we find the variant suffers a more severe gradient explosion issue than usual transformers.
We speculate that it is because the matrix $\mtx{S}^{T} \bar{\mtx{A}}\textbf{S}$ (in the middle of Equation~(\ref{eqn:approx})) inherits the high condition number of the original attention score matrices $\mtx{A}$, 
while the derivative of matrix inverse ($\left( \mtx{A}^{-1} \right)' = - \mtx{A}^{-1} \mtx{A}' \mtx{A}^{-1}$) further amplifies the condition number during backpropagation.

\begin{figure}[htbp]
\centering
\includegraphics[width=1.0\linewidth]{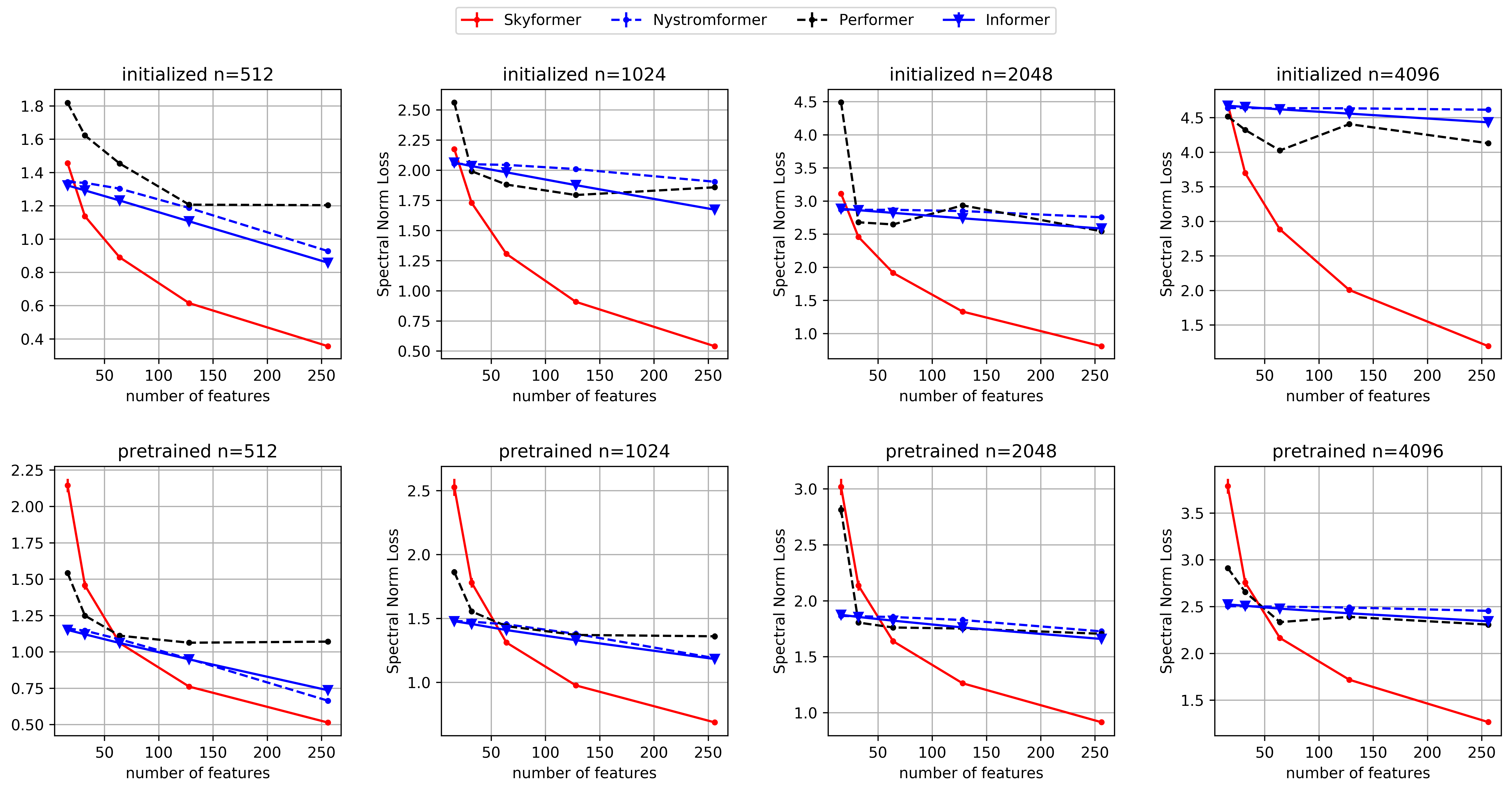}
\caption{
Spectral norm results with different sequence lengths under different $W_Q, W_K, W_V$ settings, either from initialized or pretrained BERT models. 
All methods are approximating the original self-attention output.
Y axis: Lower spectral loss means better approximation. 
X axis: Higher $d$ (number of features) means visiting more elements in the original matrix and bringing more computation costs.
The label ``Skyformer" here means that we use the algorithm behind Skyformer, mainly Eq. (5), to approximate the raw attention score matrix $A$ in self-attention. In this experiment, ``Skyformer" also needs to first approximate $A$, and then approximate $D$, as Performer does. 
}
\label{fig:norm}
\end{figure}

\section{Experimental Results}
\label{sec:exp}

\begin{figure}[htbp]
\centering
\includegraphics[width=1.0\linewidth]{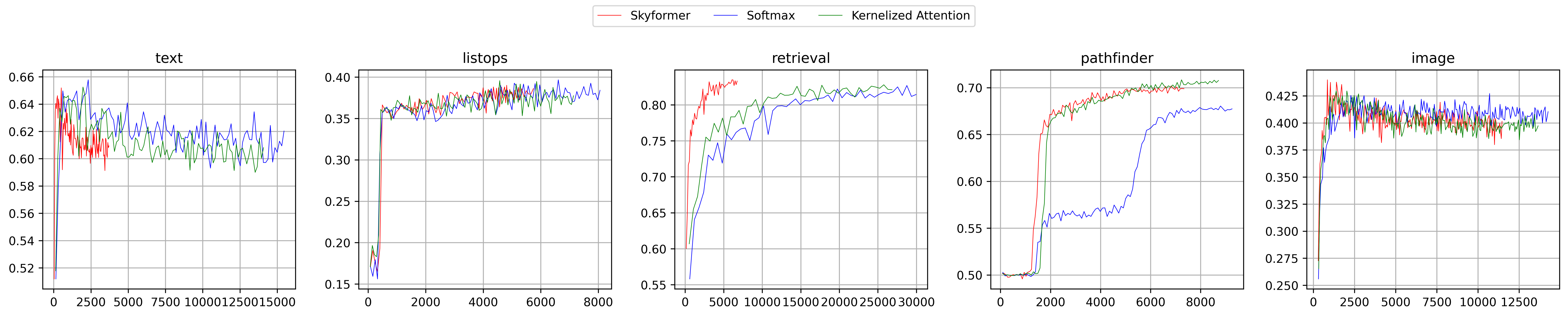}
\caption{
Validation accuracy changes with respect to training time for $50k$ steps. X axis: training time (s). Y axis: classification accuracy.
}
\label{fig:devacc}
\end{figure}

\begin{table}
\caption{Classification accuracy (\%) on LRA benchmark.}
\label{table:lra_acc}
\centering
\begin{tabular}{lcccccc}
\toprule

% \multicolumn{5}{c}{Accuracy} &
% \multicolumn{5}{c}{Time} \\
Model & Text    & ListOps & Retrieval & Pathfinder &  Image &	AVG.  \\
\midrule
Self-Attention & 61.95& 	38.37& 	80.69& 	65.26& 	40.57 & 57.37\\
Kernelized Attention & 60.22 & 	38.78 & 	81.77 & 	70.73 & 41.29 & 58.56	\\
\cmidrule(r){2-6}
Nystromformer & 64.83 & 	38.51&	80.52 & 	69.48 & 41.30 & 58.93	 \\
Linformer  & 58.93 & 	37.45 &	78.19 & 	60.93 & 37.96 & 54.69	 \\
Informer& 62.64 & 	32.53&	77.57 & 	57.83 &  38.10 & 53.73	 \\
Performer  & 64.19 & 	38.02 &	80.04 & 	66.30 & 41.43 & 58.00	 \\
Reformer  & 62.93& 	37.68& 	78.99&	66.49& 	48.87 & 58.99\\
BigBird & 63.86 &	39.25&	80.28 &	68.72 & 43.16 & 59.05\\
\cmidrule(r){2-6}
\textbf{Skyformer}   & 64.70 & 	38.69&	82.06 & 	70.73 & 40.77 & \textbf{59.39}	 \\
\bottomrule
\end{tabular}
\end{table}

\begin{table}
\caption{Running time (hour) and peak memory usage (GB). \textbf{TC}: Text Classification. \textbf{LO}: ListOps. \textbf{RE}: Retrieval. \textbf{PF}: Pathfinder. \textbf{IC}: Image Classification. \textbf{KA}: Kernelized Attention.}
\label{table:lra_timespace}
\centering
\begin{tabular}{lcccccccccc}
\toprule
\multirow{2}{*}{Model} &
\multicolumn{5}{c}{Time (h)} &
\multicolumn{5}{c}{Memory (GB)} \\
 & TC    & LO & RE & PF & IC  & TC    & LO & RE & PF  & IC \\
\midrule
Self-Attention & 4.30 & 	2.24 & 	8.33 & 	2.57 & 	4.22 & 	10.37 & 		5.37 & 	10.77 & 	5.74 & 	11.47\\
KA & 3.91 & 	1.99 & 	7.46 & 	2.42 & 	4.05 & 	5.73 & 	5.94 & 	10.46 & 	6.38 & 	6.38\\
\cmidrule(r){2-6}
\cmidrule(r){7-11}
Nystromformer  &  0.71 &  	0.71 &  	1.29 &  	1.49 &  	2.70 &  	1.21 &  	1.37 &  	2.39 &  	3.35 &  	6.71\\
Linformer  & 0.65 & 	0.60 & 	1.13 & 	1.09 & 	2.19 & 	0.99 & 	0.99	 & 1.89 & 	1.97	 & 3.94 \\
Informer  & 1.60 & 	1.19 & 	2.91 & 	2.39 & 	3.90 & 	5.12 & 	4.85 & 	5.77 & 	4.75 & 	9.51 \\
Performer  &  0.77 &  	0.73 &  	1.41 &  	1.40 &  	2.55 &  	1.09 &  	1.09 &  	2.16 &  	2.20 &  	4.39\\
Reformer  &    0.94	 &  0.85 &  	1.73 &  	1.70 &  	3.08 &  	1.61 &  	1.61 &  	2.98 &  	3.21 &  	6.42\\
BigBird & 2.00 & 	1.88 & 	3.81 & 	3.39 & 	6.53 & 	2.83 & 	2.71 & 	4.97 & 	4.97 & 	9.95 \\
\cmidrule(r){2-6}
\cmidrule(r){7-11}
Skyformer  &   1.02 &  	1.29 &  	1.86 &  	2.03 &  	3.40 &  	1.59 &  	1.75 &  	3.15 &  	4.13 &  	8.26 \\
\bottomrule
\end{tabular}
\end{table}

% In this section our goal is to show whether the proposed method is sufficient in getting comparable performance to the full self-attention with much fewer computation resources. 

% We first compute the spectrum norm, a common evaluation metric and indicator of matrix multiplication approximation, to test the effectiveness of self-attention approximation comparing to other low-rank approximation methods as a sanity check.
% Next we consider two representative use cases of pretrained transformer-based language models, pre-training a Masked Language Model and fine-tuning a pretrained RoBERTa model on downstream tasks.
% , as the evidence of whether our proposed approximation method achieves comparable performance to the original self-attention with significant smaller computation cost.
% Next, we exam the performance on Long Range Arena (LRA) benchmark, which focuses on model quality under long-context scenarios.  

\textbf{Tasks and Datasets.}
We evaluate the proposed methods on five classification tasks on LRA benchmark~\citep{DBLP:journals/corr/abs-2011-04006}, which focuses on model quality under long-context scenarios: ListOps \citep{DBLP:conf/naacl/NangiaB18}, Text Classification on IMDb review dataset \citep{DBLP:conf/acl/MaasDPHNP11}, Document Retrieval on AAN dataset \citep{DBLP:journals/lre/RadevMQA13}, Pathfinder \citep{DBLP:conf/nips/LinsleyKVWS18}, and Image Classification on CIFAR-10 \citep{krizhevsky2009learning}.
The LRA benchmark covers diverse long-sequence tasks in sequence length, task difficulty, and inspected model abilities. 
For example, ListOps and Pathfinder evaluate the abilities to capture the long-range hierarchical dependency and spatial dependency, respectively, which poses challenges for sparse attention pattern based methods.
We report the classification accuracy on the test set, training time, and peak memory usage during training for each task.

\textbf{Baselines.}
Aside from the vanilla quadratic self-attention, we compare with Big Bird~\citep{DBLP:conf/nips/ZaheerGDAAOPRWY20}, Performer~\citep{DBLP:journals/corr/abs-2009-14794}, Linformer~\citep{DBLP:journals/corr/abs-2006-04768}, Nystr\"omformer \citep{DBLP:journals/corr/abs-2102-03902}, Informer~\citep{DBLP:journals/corr/abs-2012-07436},
and Reformer~\citep{DBLP:conf/iclr/KitaevKL20}.
Most methods are approximating the vanilla full attention for efficiency and thus are not expected to have better performance.
As it is not realistic to exhaustively fine-tune all models and search for the best performance under limited computation resources, we instead only replace the self-attention module with the various attention methods and keep other experimental settings the same for fair comparisons.

% \heng{change 'almost impossible' to 'not realistic'}
% \heng{suggest to change 'aggressively' to exhaustively' because the former is too subjective about attitude }

% goals of exp:
% step num needed to converge, 
% less training time and space, 
% not necessarily better performance,
% longer seq may benefit some tasks

% hyper selection - number of features (two lengths)

% To show the approximation method really improves the efficiency, we need to train the models until convergence, and report the steps and time needed.
% Otherwise, it is possible that original transformer though needs more time in one step, but it needs less total steps, and finally instead requires less time.

\textbf{Implementation Details.}
We conduct each experiment on one Tesla V100 SXM2 16GB.
We use the LRA evaluation benchmark reimplemented in PyTorch by~\citet{DBLP:journals/corr/abs-2102-03902}.
We use a 2-layer transformer model with $64$ embedding dimension, $128$ hidden dimension, $2$ attention heads, and mean pooling for classification. 
Batch size is selected conditioned on the memory requirements of the standard self-attention method, which leads to $16$ for Text Classification, $32$ for ListOps, $16$ for Document Retrieval, $128$ for Pathfinder, and $256$ for Image Classification.
Learning rate is set to $1e-4$ for Text Classification, ListOps, and Image Classification, and $2e-4$ for Retrieval and Pathfinder.
Each model on each task is trained for $50k$ steps, during which the best checkpoint with the highest accuracy on the development set will be saved for evaluation.
For comparable computation complexity, we control the number of features to be $128$ used in all methods (except Big Bird), under which setting the models will visit $128\cdot n$ elements in the attention matrix.
For numerical consistency, all experiment results are averaged across three runs with different random seeds.

We do not follow all settings in \citep{DBLP:journals/corr/abs-2102-03902} due to the hardware limitation.
The compromises, such as approximation dimension and gradient accumulations steps, might bring performance differences comparing to results reported in \citep{DBLP:journals/corr/abs-2102-03902}.
The training instability problem also helps explain the performance gap.

\textbf{Results.}
The training process of the standard softmax-based method is unstable as observed in Figure~\ref{fig:devacc}: it takes more steps to reach the stationary distribution of its long-time limit, and it is more easily getting stuck in a local minimum. 
Runs with different random seeds may bring divergent performances, and probably leads to lower averaged scores.
We have also tried directly approximating the self-attention method with the \nystrom method and observed numerical instability during training.

Replacing the softmax structure with Gaussian kernel somehow alleviates this instability problem with boosted performance as shown in Table~\ref{table:lra_acc}. 
However, the time and space requirement of Kernelized Attention is not significantly improved compared to the original version, which serves as the motivation to approximate Kernelized Self-Attention with \nystrom method.

Though not necessarily the fastest, our proposed Skyformer can efficiently converge to the long-time limit with comparable general performance in classification accuracy (Table~\ref{table:lra_acc}) and resource consumption (Table~\ref{table:lra_timespace}).
The advantages over the standard self-attention are significant with consistently less training time and generally better performance.
For example, Skyformer brings nearly 4 times speed-up on text classification and document retrieval while with 2.75\% and 1.37\% accuracy improvement over the standard self-attention.

\textbf{Limitations.}
The applications of Skyformer might be limited to long sequence tasks because for small sequence length $n$ the statistical dimension $d_stat$ might be close to $n$.

To make the claim above clear, we first reiterate that the efficiency of Skyformer is related to $d_stat$.
As implied by Theorem~\ref{thm:tilde_K}, the intrinsic difficulty of approximating a raw attention score matrix is concluded as $d_{stat}$, which corresponds to the effective rank of matrix $\bar{C}$. 
The complexity of Skyformer depends on the sub-sample size $d$ (the size of the sub-sampling matrix $S$). A large $d_{stat}$ leads to a large $d$ , and an inefficient application of the \nystrom method. 

The classical theory for statistical dimension only guarantees that $d_{stat}$ is small (compared to $n$) when $n$ is large enough, and it is possible the statistical dimension associated with a short sequence might be even close to the sequence $n$. 
Therefore a large $n$ serves as a condition to make the method work. 
Figure \ref{fig:norm} empirically shows that our method performs better with larger $n$’s.

\section{Conclusions and future work}

Motivated by the connection between kernel methods and self-attention, 
we introduce Kernelized Attention, which replaces the softmax structure in self-attention with a Gaussian kernel.
We also propose Skyformer, which adapts the \nystrom method to Kernelized Attention to improve its efficiency.
We expect the new model can enjoy more stable training while inheriting the strong performance from self-attention.
Extensive experiments verify our intuitions and show that both Kernelized Attention and its \nystrom approximation variant have comparable accuracy to the original Transformer on the LRA benchmark.

Direct development of this work is the incorporation of further computation tricks in kernel methods, such as the local and global approximation for gram matrix \citep{snelson2007local} and the importance sampling in \nystrom methods \citep{DBLP:conf/nips/MuscoM17, chen2021fast, chen2021accumulations}.
Other related questions include the choice of the kernel other than the Gaussian kernel in our kernelized attention model.
It is expected that for different tasks there will be specific kernels more proper than the original self-attention.
The results in this work also shed new light on the design of the attention mechanism, which may benefit board downstream NLP tasks.

% A possible trick would be correcting the diagonal / band / block part of $\tilde A$ (from "local and global sparse Gaussian process approximation"):
% \begin{align*}
% \tilde A + \text{a certain part of}(A - \tilde{A}),
% \end{align*}
% which ensures the entries in the certain part are exact.

% Diagonal Correction on $\Tilde{\textbf{A}}$: $\Tilde{\textbf{A}} + \text{diag}(\textbf{A}-\Tilde{\textbf{A}})$.
% Here the correction can be of any low rank form, including diagonal, banded, or block-wise.

%  can be further combined with previous tricks in BigBird

%  optimize based on the properties of document (sentence-wise block)

\begin{ack}

This research is based upon work in part supported by the Office of the Director of National Intelligence (ODNI), Intelligence Advanced Research Projects Activity (IARPA), via contract No. FA8650-17-C-9116, and U.S. DARPA KAIROS Program No. FA8750-19-2-1004. This work is also in part supported by NSF grant DMS-1810831. The views and conclusions contained herein are those of the authors and should not be interpreted as necessarily representing the official policies, either expressed or implied, of DARPA, ODNI, IARPA, or the U.S. Government. The U.S. Government is authorized to reproduce and distribute reprints for governmental purposes notwithstanding any copyright annotation therein.

\end{ack}

\clearpage
\bibliographystyle{plainnat}
\bibliography{ref}
\clearpage
\appendix

\section{Validation loss}
\label{sec:dev_loss}
\begin{figure}[htbp]
\centering
\includegraphics[width=1.0\linewidth]{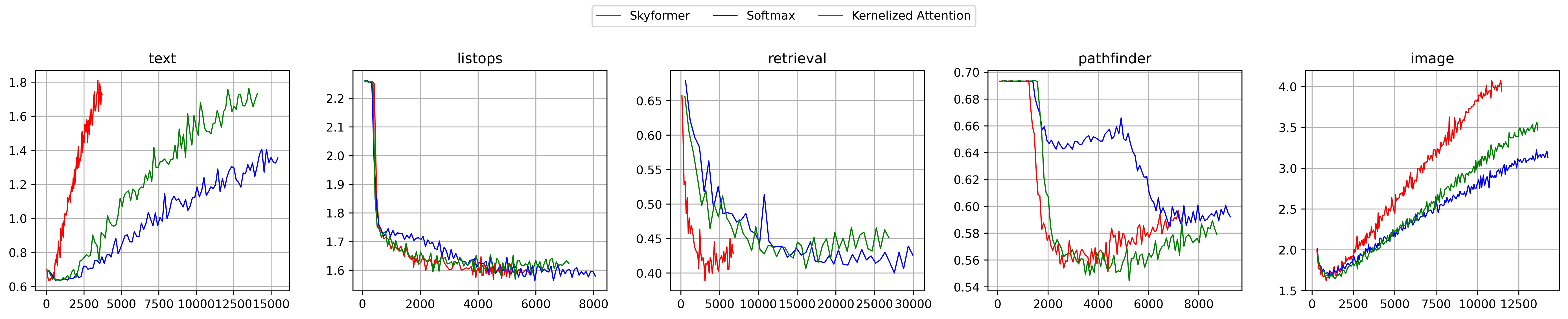}
\caption{
Validation loss changes for $50k$ steps. X-axis: Training time (second). Y-axis: Cross Entropy Loss on validation set. 
}
\label{fig:devloss}
\end{figure}

Figure~\ref{fig:devloss} shows the validation loss changes with respect to training time for $50$k steps as supplementary results for the experiments in Section~\ref{sec:exp}.
In general, Skyformer converges faster and finishes $50$k steps earlier than vanilla Attention and Kernelized Attention over all tasks.
We further remark that on Text Classification, all models quickly fall into over-fitting, and thus the validation losses rise quickly.
On Pathfinder, due to the difficulty of training, in the trial shown in the figure vanilla Attention fails to reach the best long-time limit under a certain setting.

\section{Singular value decay rate}
\begin{figure}[htbp]
\centering
\includegraphics[width=0.35\linewidth]{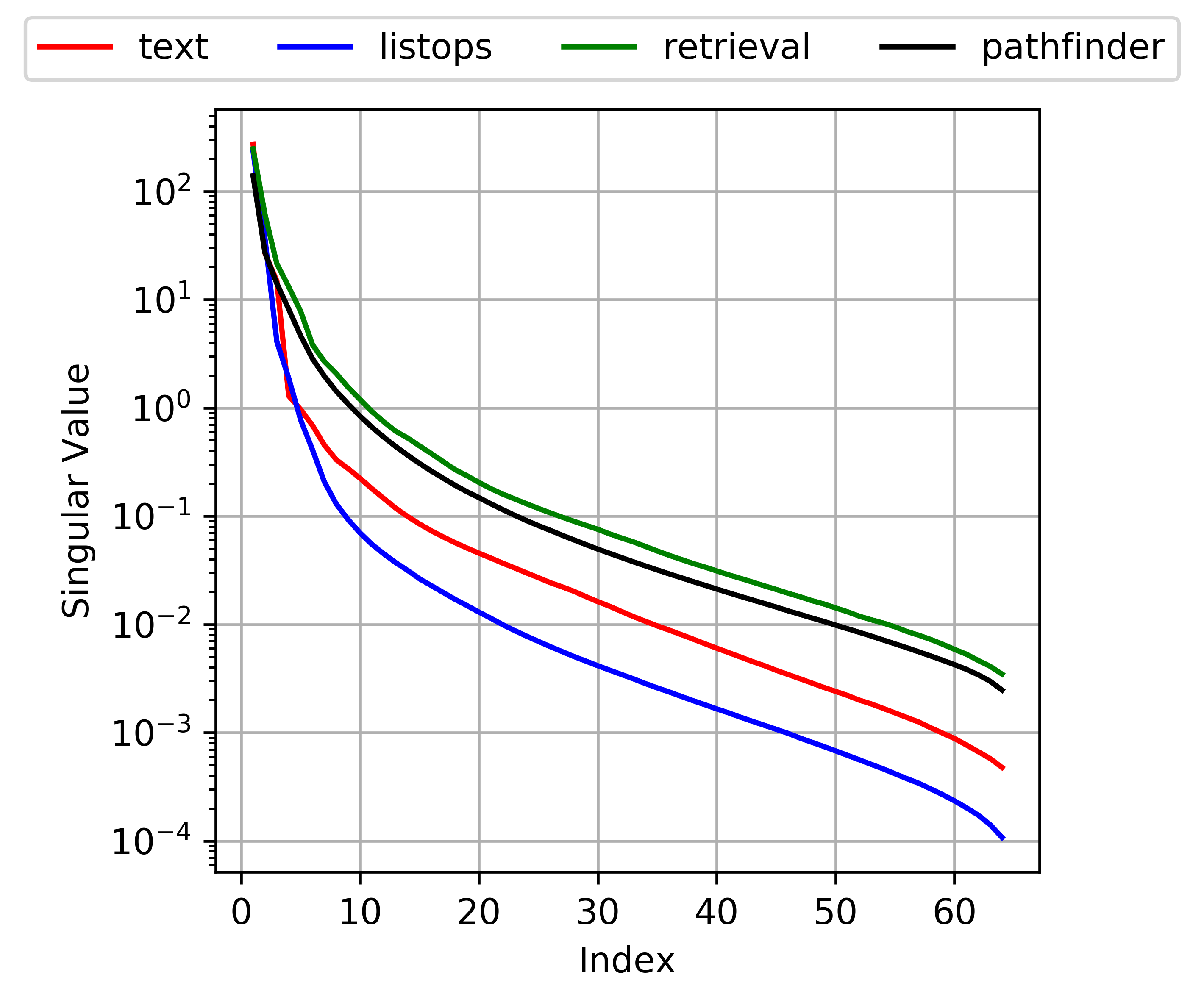}
\caption{
Singular value distribution of attention output. }
\label{fig:svd}
\end{figure}

% We propose to measure the task difficulty with the singular value decay rate in attention output. 

Figure~\ref{fig:svd} shows the singular value distribution of attention output from the second layer of a trained vanilla transformer.
Results are averaged across one random batch from the test set in each LRA task. 

The singular values decay fast and thus justify the low-rank approximation, as analyzed by \citet{DBLP:journals/corr/abs-2006-04768, dong2021attention}.
We propose to measure the task difficulty with the singular value decay rate in attention output, as higher intrinsic task difficulty forces the model to output a matrix with more large singular values.
Such matrices are considered more informative since they are harder to approximate, requiring more ranks even in the truncated SVD approximation.
With the observation in Figure~\ref{fig:svd}, we conclude that the singular values in Document Retrieval and Pathfinder tasks decay slower, and those two tasks are more difficult than Text Classification and ListOps.
% We suggest to focus on those two tasks as a metric to evaluate the performance of efficient transformers to capture long-distance dependency.

\section{Useful facts}
\label{sec:facts}

This section introduces some useful facts, which are key in the proof in the next section. 
To start with, we provide a matrix concentration inequality as follows.
% The following theorem is copied from the tutorial \citep[Theorem~1.4,~1.6]{tropp2012user} for the reader's convenience.
\begin{lemma}[Matrix Bernstein Inequality \citep{tropp2012user}]
\label{lem:intro-bernstein}
Consider a finite sequence $\{ \mtx{X}_k \}$ of independent, random, self-adjoint matrices with dimension $n$.  
Assume that each random matrix satisfies
$$
\Expect \mtx{X}_k = \mtx{0}
\quad\text{and}\quad
\| \mtx{X}_k \| \leq R
\quad\text{almost surely}.
$$
Then, for all $t \geq 0$,
$$
\Prob{ \| \sum\nolimits_k \mtx{X}_k \| \geq t }
	\leq 2n \cdot \exp\left( \frac{-t^2/2}{\sigma^2 + Rt/3} \right)
	\quad\text{where}\quad
	\sigma^2 \geq \norm{ \sum\nolimits_k \Expect \big(\mtx{X}_k^2 \big) }.
$$
\end{lemma}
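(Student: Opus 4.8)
The plan is to reduce everything to the eigenvalue structure of the symmetric matrix $\mtx{N} \defeq \mtx{D}_M^{-1/2}(\mtx{M}+\gamma\mtx{I})\mtx{D}_M^{-1/2}$, whose singular values coincide with the moduli of its eigenvalues, so that it suffices to locate the spectrum of $\mtx{N}$ inside $(0,1]$. First I would record two structural facts about $\mtx{M} = \mtx{S}^{T}\bar{\mtx{C}}\mtx{S}$. Since $\bar{\mtx{C}}$ is a Gaussian kernel matrix it is PSD, so $\mtx{M}$ is PSD and $\mtx{M}+\gamma\mtx{I} \succ \mtx{0}$ with $\lambda_{\min}(\mtx{M}+\gamma\mtx{I}) \geq \gamma$; moreover every entry of $\bar{\mtx{C}}$ is strictly positive and $\mtx{S}$ has nonnegative entries, so $\mtx{M}$ (hence $\mtx{M}+\gamma\mtx{I}$) is entrywise positive. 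Consequently each diagonal entry $(\mtx{D}_M)_{kk} = \gamma + \sum_l \mtx{M}_{kl}$ is strictly positive, and $\mtx{D}_M^{-1/2}$ is a well-defined invertible diagonal matrix.

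For the lower bound on the eigenvalues I would note that $\mtx{N}$ is congruent to the positive definite matrix $\mtx{M}+\gamma\mtx{I}$ through the invertible map $\mtx{D}_M^{-1/2}$: writing $\mtx{y} = \mtx{D}_M^{-1/2}\mtx{x}$, one has $\mtx{x}^{T}\mtx{N}\mtx{x} = \mtx{y}^{T}(\mtx{M}+\gamma\mtx{I})\mtx{y} > 0$ for $\mtx{x} \neq \mtx{0}$. Hence $\mtx{N} \succ \mtx{0}$, and the Rayleigh-quotient estimate $\lambda_{\min}(\mtx{N}) \geq \gamma / \max_k (\mtx{D}_M)_{kk} > 0$ gives a strictly positive lower bound, which is the quantity that ultimately controls the final norm.

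For the upper bound I would exploit that $\mtx{N}$ is similar to the row-normalized matrix $\mtx{P} \defeq \mtx{D}_M^{-1}(\mtx{M}+\gamma\mtx{I}) = \mtx{D}_M^{1/2}\,\mtx{N}\,\mtx{D}_M^{-1/2}$, so the two share the same eigenvalues. By the very definition of $\mtx{D}_M$ we have $\mtx{P}\mtx{1} = \mtx{1}$, and since $\mtx{P}$ has nonnegative entries it is a row-stochastic matrix; Gershgorin's disk theorem (equivalently Perron--Frobenius) then confines its eigenvalues to modulus at most $1$. As $\mtx{N}$ is symmetric its eigenvalues are real, so combining with the previous paragraph all eigenvalues of $\mtx{N}$, and hence all its singular values, lie in $(0,1]$.

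The main subtlety, and the step I expect to require the most care, is the top of the spectrum: the stochasticity forces $\lambda_{\max}(\mtx{N}) = 1$ exactly, attained by the eigenvector $\mtx{D}_M^{1/2}\mtx{1}$, so the spectrum touches the closed endpoint $1$ rather than sitting strictly inside the open interval. This does no harm to the conclusion, because for symmetric $\mtx{N}$ one has $\|\mtx{I}-\mtx{N}\| = \max_i |1-\lambda_i(\mtx{N})| = \max\big(1-\lambda_{\min}(\mtx{N}),\, \lambda_{\max}(\mtx{N})-1\big) = 1 - \lambda_{\min}(\mtx{N}) < 1$, where the last equality uses $\lambda_{\max}(\mtx{N}) = 1$ and the strict inequality uses the positive lower bound from the congruence argument. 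Thus the operative content of the lemma, namely $\|\mtx{I}-\mtx{N}\| < 1$, follows: positive definiteness supplies strictness at the bottom of the spectrum, while the stochastic structure supplies a harmless tightness at the top.
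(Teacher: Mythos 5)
Your proposal does not prove the statement at hand. Lemma~\ref{lem:intro-bernstein} is the matrix Bernstein inequality---a concentration bound for a sum of \emph{independent, mean-zero, self-adjoint random matrices}, with an exponential tail governed by the variance proxy $\sigma^2$, the uniform bound $R$, and the dimensional factor $2n$. What you have written is instead a proof of Lemma~\ref{lem:iterative}: every object you manipulate ($\mtx{M}=\mtx{S}^T\bar{\mtx{C}}\mtx{S}$, $\mtx{D}_M$, the symmetrized matrix $\mtx{N}$) is deterministic, and at no point do you use---or even mention---independence, the centering $\Expect \mtx{X}_k=\mtx{0}$, the almost-sure bound $\|\mtx{X}_k\|\leq R$, or any probabilistic estimate that could yield a tail probability. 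In the paper, Lemma~\ref{lem:intro-bernstein} is imported verbatim from \citet{tropp2012user} and not proved; a self-contained argument would run through the matrix Laplace transform method: for $\theta>0$, $\Prob{\lambda_{\max}(\sum\nolimits_k \mtx{X}_k)\geq t}\leq e^{-\theta t}\,\Tr\exp\big(\sum\nolimits_k \log \Expect e^{\theta \mtx{X}_k}\big)$ by Markov's inequality together with Lieb's concavity theorem (the subadditivity step where independence enters), then the moment-generating-function bound $\Expect e^{\theta \mtx{X}_k} \psdle \exp\big(g(\theta)\,\Expect \mtx{X}_k^2\big)$ with $g(\theta)=\frac{\theta^2/2}{1-\theta R/3}$ for $0<\theta<3/R$ (where the hypotheses $\Expect\mtx{X}_k=\mtx{0}$ and $\|\mtx{X}_k\|\leq R$ are consumed), the trace bound $\Tr e^{\mtx{Y}} \leq n\, e^{\lambda_{\max}(\mtx{Y})}$ (the source of the factor $n$), optimization over $\theta$, and a union bound over the two spectral tails of $\|\cdot\|$ (the source of the factor $2$). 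None of these ingredients appears in your proposal, so as a proof of the stated lemma it is vacuous.

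Read instead as a blind proof of Lemma~\ref{lem:iterative}, your argument is sound and essentially matches the paper's appendix proof: your congruence argument for the lower spectral edge is the paper's positive-definiteness step, and your Gershgorin/row-stochasticity argument for $\mtx{P}=\mtx{D}_M^{-1}(\mtx{M}+\gamma\mtx{I})$ is equivalent to the paper's graph-Laplacian computation showing $\mtx{D}_M-(\mtx{M}+\gamma\mtx{I})\psdge \mtx{0}$, both resting on the entrywise nonnegativity of $\mtx{M}+\gamma\mtx{I}$. Your endpoint observation is in fact a legitimate small correction to that lemma as stated: since $\mtx{P}\mtx{1}=\mtx{1}$, the value $1$ is an exact eigenvalue of $\mtx{N}$ with eigenvector $\mtx{D}_M^{1/2}\mtx{1}$, so the singular values lie in $(0,1]$ rather than the open interval $(0,1)$, while the operative conclusion $\|\mtx{I}-\mtx{N}\|=1-\lambda_{\min}(\mtx{N})<1$ survives intact. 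But none of this bears on the statement you were asked to prove: the Matrix Bernstein inequality remains entirely unaddressed.
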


For a certain $n$-by-$n$ orthogonal matrix $\mtx{H}$ ($\mtx{H} \mtx{H}^T$ is a diagonal matrix) 
and an $n$-by-$d$ uniform sub-sampling matrix $\mtx{S}$ (as defined in Definition~\ref{def:subsampling} in the main paper),
we denote the sketching matrix $\mtx{\Pi} \defeq \sqrt{n} \mtx{S}$.
We aim to show $\mtx{H} \mtx{\Pi} \mtx{\Pi}^T \mtx{H}^T$ can satisfy $(\frac12, \delta)$-MA property for $\mtx{H} \mtx{H}^T$ by the following lemma.
\begin{lemma}
\label{lem:frac12MA}
Denote the stable rank $s \defeq \frac{\|\mtx{H}\|_F^2}{\|\mtx{H}\|^2} \geq 1$, and a constant $\delta < 1/2$. 
Suppose there exists a constant $\beta \in (0, 1]$ such that $\beta \leq \frac{\|\mtx{H}\|_F^2}{n \|\mtx{H}^{(i)}\|^2}, \forall i = 1,\dots,n$,
where $\mtx{H}^{(i)}$ is the $i$-th column of $\mtx{H}$. 
There exists a constant $C_0$ that if 
\begin{align*}
d \geq C_0 \frac{s}{\beta} \log \frac{n}{\delta}, 
\end{align*}
then $\mtx{H} \mtx{\Pi} \mtx{\Pi}^T \mtx{H}^T$ satisfies $(\frac12, \delta)$-MA property for $\mtx{H} \mtx{H}^T$.
\end{lemma}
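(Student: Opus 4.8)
The plan is to reduce the statement to the Matrix Bernstein Inequality (Lemma~\ref{lem:intro-bernstein}). First I would expand the random matrix $\mtx{H} \mtx{\Pi} \mtx{\Pi}^T \mtx{H}^T$ as a sum of independent rank-one contributions, one per column of the sketch. Recall $\mtx{\Pi} = \sqrt{n}\,\mtx{S}$, and by Definition~\ref{def:subsampling} each column $\mtx{S}^{(j)}$ equals $\sqrt{1/d}\,\mtx{e}_{i}$ with probability $1/n$, so that $\mtx{\Pi}^{(j)} = \sqrt{n/d}\,\mtx{e}_{i}$. Writing $\mtx{h}_i \defeq \mtx{H} \mtx{e}_i = \mtx{H}^{(i)}$, I would obtain
\begin{align*}
\mtx{H} \mtx{\Pi} \mtx{\Pi}^T \mtx{H}^T = \sum_{j=1}^{d} \mtx{H} \mtx{\Pi}^{(j)} (\mtx{\Pi}^{(j)})^T \mtx{H}^T = \frac{n}{d} \sum_{j=1}^{d} \mtx{h}_{i_j} \mtx{h}_{i_j}^T,
\end{align*}
where each index $i_j$ is drawn uniformly from $[n]$. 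A direct computation of the expectation over the uniform draw gives $\Expect\big[\mtx{h}_{i_j} \mtx{h}_{i_j}^T\big] = \frac{1}{n} \sum_{i=1}^n \mtx{h}_i \mtx{h}_i^T = \frac{1}{n}\,\mtx{H} \mtx{H}^T$, so each summand is unbiased for $\frac{1}{d}\mtx{H}\mtx{H}^T$ and the whole sum is unbiased for $\mtx{H}\mtx{H}^T$.

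Next I would define the centered, self-adjoint summands $\mtx{X}_j \defeq \frac{n}{d}\,\mtx{h}_{i_j} \mtx{h}_{i_j}^T - \frac{1}{d}\,\mtx{H} \mtx{H}^T$, which are independent, identically distributed, and satisfy $\Expect \mtx{X}_j = \mtx{0}$, so that $\sum_j \mtx{X}_j = \mtx{H} \mtx{\Pi} \mtx{\Pi}^T \mtx{H}^T - \mtx{H}\mtx{H}^T$ is exactly the approximation error we must control. Applying Lemma~\ref{lem:intro-bernstein} then requires bounding two quantities: the almost-sure spectral-norm bound $R$ on each $\mtx{X}_j$, and the variance proxy $\sigma^2 \geq \|\sum_j \Expect \mtx{X}_j^2\|$. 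For $R$, since the two terms in $\mtx{X}_j$ are each PSD I would bound $\|\mtx{X}_j\| \leq \max\{\frac{n}{d}\|\mtx{h}_{i_j}\|^2, \frac1d\|\mtx{H}\mtx{H}^T\|\}$; here the hypothesis $\beta \leq \frac{\|\mtx{H}\|_F^2}{n \|\mtx{H}^{(i)}\|^2}$ is precisely what lets me control the row norm uniformly via $\frac{n}{d}\|\mtx{h}_{i}\|^2 \leq \frac{\|\mtx{H}\|_F^2}{d\beta} = \frac{s\|\mtx{H}\|^2}{d\beta}$, yielding $R = \m O\big(\frac{s}{d\beta}\|\mtx{H}\|^2\big)$. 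For the variance, I would compute $\Expect \mtx{X}_j^2 \psdle \frac{n}{d^2}\,\Expect\big[\|\mtx{h}_{i_j}\|^2\, \mtx{h}_{i_j}\mtx{h}_{i_j}^T\big] \psdle \frac{1}{d}R \cdot \frac1n \mtx{H}\mtx{H}^T$, so that summing over $j$ gives $\sigma^2 = \m O\big(\frac{s}{d\beta}\|\mtx{H}\|^2\big)\cdot\|\mtx{H}\mtx{H}^T\| = \m O\big(\frac{s}{d\beta}\|\mtx{H}\|^4\big)$.

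Finally I would set the target deviation $t = \frac12\|\mtx{H}\mtx{H}^T\| = \frac12\|\mtx{H}\|^2$, matching the $\frac12$ in the $(\frac12, \delta)$-MA property, and substitute the bounds on $R$ and $\sigma^2$ into the Bernstein tail. Both $\sigma^2 + Rt/3$ scale like $\frac{s}{d\beta}\|\mtx{H}\|^4$ while $t^2$ scales like $\|\mtx{H}\|^4$, so the exponent simplifies to $-\Theta\big(\frac{d\beta}{s}\big)$ and the $\|\mtx{H}\|^4$ factors cancel cleanly; demanding this exponent dominate $\log(2n/\delta)$ yields exactly $d \geq C_0 \frac{s}{\beta}\log\frac{n}{\delta}$ for an appropriate absolute constant $C_0$, which forces $\Prob{\|\mtx{H}\mtx{\Pi}\mtx{\Pi}^T\mtx{H}^T - \mtx{H}\mtx{H}^T\| > \frac12\|\mtx{H}\mtx{H}^T\|} < \delta$ as required. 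The main obstacle I anticipate is the variance computation: carefully establishing the PSD domination $\Expect \mtx{X}_j^2 \psdle (\text{scalar})\cdot\mtx{H}\mtx{H}^T$ in the Loewner order, and tracking how the $\beta$ and stable-rank factors combine, so that the final substitution produces the clean $s/\beta$ dependence rather than a looser bound.
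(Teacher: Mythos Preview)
Your proposal is correct and follows essentially the same route as the paper: decompose $\mtx{H}\mtx{\Pi}\mtx{\Pi}^T\mtx{H}^T-\mtx{H}\mtx{H}^T$ into $d$ i.i.d.\ centered rank-one summands, bound $R$ and $\sigma^2$ via the $\beta$ hypothesis, and apply Matrix Bernstein with $t=\tfrac12\|\mtx{H}\|^2$. Two small remarks: (i) your variance sketch has two compensating slips --- the first bound should carry $\tfrac{n^2}{d^2}$ rather than $\tfrac{n}{d^2}$, and the second should give $\tfrac1d R\,\mtx{H}\mtx{H}^T$ without the extra $\tfrac1n$ --- so your final $\sigma^2=\m O(\tfrac{s}{d\beta}\|\mtx{H}\|^4)$ is right; (ii) the paper additionally uses that $\mtx{H}$ has orthogonal columns (stated just before the lemma) to compute $\Expect\mtx{X}_k^2$ exactly, whereas your PSD-domination argument works without that assumption and lands on the same bounds.
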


\begin{proof}
The main idea is to utilize Lemma~\ref{lem:intro-bernstein} by setting $t = \frac12 \|\mtx{H} \mtx{H}^T\| = \frac12 \|\mtx{H}\|^2$.
Specifically, we denote the matrices 
\begin{align*}
\mtx{X}_k &= \mtx{H} \mtx{\Pi}^{(i)} \left(\mtx{\Pi}^{(i)}\right)^T \mtx{H}^T - \frac1d \mtx{H} \mtx{H}^T, \quad \text{so that} \\
\sum\nolimits_k \mtx{X}_k &= \mtx{H} \mtx{\Pi} \mtx{\Pi}^T \mtx{H}^T - \mtx{H} \mtx{H}^T.
\end{align*}

% To prove the lower bound of $d$ in the lemma, 
We still need two steps to give control of $R$ and $\sigma^2$.
For $R$, we have
\begin{align*}
\|\mtx{X}_k\| &= \norm{\frac1d \sum_{i=1}^n (n z_{ki} -1) \mtx{H}^{(i)} \left(\mtx{H}^{(i)}\right)^T}
\leq \frac1d \max \left\{\max_i (n - 1) \|\mtx{H}^{(i)}\|^2, \|\mtx{H}\|^2 \right\} \\
&\leq \frac1d n \max_i \|\mtx{H}^{(i)}\|^2,
\end{align*}
where $\{z_{ki}\}_{i=1}^n$ are the indicators of whether the $i$-th column is chosen.
The first inequality of the preceding display holds due to the fact that $\mtx{H}$ is an orthogonal matrix.
Using the condition $n \leq \frac{\|\mtx{H}\|_F^2}{\beta \|\mtx{H}^{(i)}\|^2}, \forall i = 1,\dots,n$, we further have
\begin{align*}
\|\mtx{X}_k\| \leq \frac{\|\mtx{H}\|_F^2}{d \beta},
\end{align*}
and we thus set $R \defeq \frac{\|\mtx{H}\|_F^2}{d \beta}$.
On the other hand,
\begin{align*}
\Expect \mtx{X}_k^2 
= \frac{1}{d^2} \sum_{i=1}^n \Expect \left( (n z_{ki} -1)^2 \right) \|\mtx{H}^{(i)}\|^2 \mtx{H}^{(i)} \left(\mtx{H}^{(i)}\right)^T
= \frac{1}{d^2} \sum_{i=1}^n (n-1) \|\mtx{H}^{(i)}\|^2 \mtx{H}^{(i)} \left(\mtx{H}^{(i)}\right)^T.
\end{align*}
Again using the condition that $n \|\mtx{H}^{(i)}\|^2 \leq \frac{\|\mtx{H}\|_F^2}{\beta}, \forall i = 1,\dots,n$, we reach
\begin{align*}
\norm{\Expect \sum_{k=1}^d \mtx{X}_k^2} \leq \frac1d \frac{\|\mtx{H}\|_F^2}{\beta} \norm{\mtx{H} \mtx{H}^T} 
= \frac{\|\mtx{H}\|_F^2}{d \beta} \norm{\mtx{H}}^2,
\end{align*}
and set $\sigma^2 \defeq \frac{\|\mtx{H}\|_F^2}{d \beta} \norm{\mtx{H}}^2$.

Finally we plug $R$ and $\sigma^2$ into Lemma~\ref{lem:intro-bernstein} and obtain:
\begin{align*}
\Prob{ \norm{\mtx{H} \mtx{\Pi} \mtx{\Pi}^T \mtx{H}^T - \mtx{H} \mtx{H}^T} \geq \frac12 \|\mtx{H}\|^2}
	\leq 2n \cdot \exp\left( \frac{-\|\mtx{H}\|^4/8}{\frac{s \|\mtx{H}\|^4}{d \beta} + \frac{s \|\mtx{H}\|^4}{6 d \beta}} \right).
\end{align*}
To ensure the right-hand-side is smaller than $\delta$, we just need
\begin{align*}
d \geq \frac{28}{3} \frac{s}{\beta} \log \frac{2n}{\delta},
\end{align*}
which validates the lemma.
\end{proof}

\section{Proof of Theorem~\ref{thm:tilde_K} in the main paper}
\label{sec:thm_error}

\begin{proof}
The conclusion in the lemma can be divided into two parts, 
that $\tilde{\bar{\mtx{C}}} \psdle \bar{\mtx{C}}$ 
and $\bar{\mtx{C}} \psdle \tilde{\bar{\mtx{C}}} + \lambda \mtx{I}$.
To prove them we first introduce some notations and auxiliary results.
Since $\bar{\mtx{C}}$ is PSD, there exists a matrix $\mtx{B}$ satisfying $\mtx{B} \mtx{B}^T = \bar{\mtx{C}}$.
We further denote $\mtx{B}$'s SVD decomposition as $\mtx{B} = \mtx{U} \mtx{\Sigma}^{\frac12} \mtx{V}^T$ 
($\bar{\mtx{C}} = \mtx{U} \mtx{\Sigma} \mtx{U}^T$),
where both $\mtx{U}$ and $\mtx{V}$ are $2n$-by-$2n$ orthonormal matrices.
(In this section we slightly abuse the notation that $\mtx{V}$ represents the matrix of right-singular vectors, instead of the value matrix in self-attention.)
Define $\bar{\mtx{\Sigma}} \defeq \mtx{\Sigma} + \lambda \mtx{I}, \mtx{\Psi} \defeq \mtx{U} \mtx{\Sigma}^{\frac12} \bar{\mtx{\Sigma}}^{-\frac12}$, which implies $\bar{\mtx{C}} (\bar{\mtx{C}} + \lambda \mtx{I})^{-1} = \mtx{\Psi} \mtx{\Psi}^T$.
Also following the notations in the last section,
we define the $2n$-by-$d$ matrix $\mtx{\Pi} \defeq \sqrt{2n} \mtx{S}$
With those notations, $\tilde{\bar{\mtx{C}}}$ can be rewritten as $\mtx{B} \mtx{B}^T \mtx{\Pi} (\mtx{\Pi}^T \mtx{B} \mtx{B}^T \mtx{\Pi})^\dagger \mtx{\Pi} \mtx{B} \mtx{B}^T = \mtx{B} \mtx{P_\Pi} \mtx{B}^T$,
where $\mtx{P_\Pi}$ is the orthogonal projection matrix for the column space of $\mtx{B}^T \mtx{\Pi}$.
It is easy to check that $\bar{\mtx{C}} - \tilde{\bar{\mtx{C}}} = \mtx{B} (\mtx{I} - \mtx{P_\Pi}) \mtx{B}^T$.
Since $\mtx{I} - \mtx{P_\Pi}$ is an orthogonal projection matrix (which is PSD), 
we have $\bar{\mtx{C}} - \tilde{\bar{\mtx{C}}} \psdge \mtx{0}$,
which proves the first conclusion that $\tilde{\bar{\mtx{C}}} \psdle \bar{\mtx{C}}$.

For the second conclusion, we utilize the following important identity:
\begin{align*}
\mtx{B}^T \mtx{\Pi} \mtx{\Pi}^T \mtx{B} - \mtx{B}^T \mtx{B} 
&= \mtx{V} \bar{\mtx{\Sigma}}^{\frac12} \big( \bar{\mtx{\Sigma}}^{-\frac12} \mtx{V}^T (\mtx{B}^T \mtx{\Pi} \mtx{\Pi}^T \mtx{B} - \mtx{B}^T \mtx{B}) \mtx{V} \bar{\mtx{\Sigma}}^{-\frac12} \big) \bar{\mtx{\Sigma}}^{\frac12} \mtx{V}^T \\
&= \mtx{V} \bar{\mtx{\Sigma}}^{\frac12} \big( \mtx{\Psi}^T \mtx{\Pi} \mtx{\Pi}^T \mtx{\Psi} - \mtx{\Psi}^T \mtx{\Psi} \big) \bar{\mtx{\Sigma}}^{\frac12} \mtx{V}^T.
\end{align*}
For $\mtx{\Psi}$, we have that its squared Frobenius norm $\|\mtx{\Psi}\|_F^2 = d_{stat}$, and $\|\mtx{\Psi}\|^2 
= \|\bar{\mtx{C}} (\tilde{\bar{\mtx{C}}} + \lambda \mtx{I})^{-1}\| \geq 1/2$,
indicating that $\mtx{\Psi}$'s stable rank $s = \norm{\mtx{\Psi}}_F^2 / \norm{\mtx{\Psi}}^2$ is at most $2 d_{stat}$.

Taking $\varepsilon = \frac12$ and applying Lemma~\ref{lem:frac12MA}, we can conclude that with the conditions on $d$ in the theorem, 
$\mtx{\Psi}^T \mtx{\Pi} \mtx{\Pi}^T \mtx{\Psi}$ satisfies $(\frac12, \delta)$-MA property for $\mtx{\Psi}^T \mtx{\Psi}$.
Therefore it holds with probability $1-\delta$ that, 
\begin{align*}
\|\mtx{\Psi}^T \mtx{\Pi} \mtx{\Pi}^T \mtx{\Psi} - \mtx{\Psi}^T \mtx{\Psi}\| 
\leq \frac12 \|\mtx{\Psi}\|^2
\leq \frac12.
\end{align*}
From identity $\mtx{V} \bar{\mtx{\Sigma}}^{\frac12} \bar{\mtx{\Sigma}}^{\frac12} \mtx{V}^T = \mtx{B}^T \mtx{B} + \frac{\lambda}{2} \mtx{I}$, we obtain
\begin{align}
\frac12 \mtx{B}^T \mtx{B} - \frac{\lambda}{2} \mtx{I} \psdle \mtx{B}^T \mtx{\Pi} \mtx{\Pi}^T \mtx{B} 
\psdle \frac32 \mtx{B}^T \mtx{B} + \frac{\lambda}{2} \mtx{I},
\end{align}
which implies
\begin{align}
\label{eqn:BTB_bound}
\mtx{B}^T \mtx{B} \psdle 2 \mtx{B}^T \mtx{\Pi} \mtx{\Pi}^T \mtx{B} + \lambda \mtx{I}.
\end{align}

Finally, we multiply two sides of Eq.~(\ref{eqn:BTB_bound}) by $(\mtx{I} - \mtx{P_\Pi})$ to obtain
\begin{align*}
(\mtx{I} - \mtx{P_\Pi}) \mtx{B}^T \mtx{B} (\mtx{I} - \mtx{P_\Pi}) \psdle 2 \cdot \mtx{0} + \lambda (\mtx{I} - \mtx{P_\Pi}) \psdle \lambda \mtx{I},
\end{align*}
where the second inequality is due to the fact that $(\mtx{I} - \mtx{P_\Pi})$ is an orthogonal projection matrix.
The equation above implies $\|(\mtx{I} - \mtx{P_\Pi}) \mtx{B}^T \mtx{B} (\mtx{I} - \mtx{P_\Pi})\| = \|\mtx{B} (\mtx{I} - \mtx{P_\Pi}) \mtx{B}^T\| \leq \lambda$, 
which completes the proof for the second conclusion $\bar{\mtx{C}} \psdle \tilde{\bar{\mtx{C}}} + \lambda \mtx{I}$.

Based on the conclusion above, the last implication is direct:
\begin{align*}
\|\tilde{\mtx{C}} - \mtx{C}\| = \norm{(\mtx{I}, \mtx{0}) \left( \tilde{\bar{\mtx{C}}} - \bar{\mtx{C}} \right) (\mtx{0}, \mtx{I})^T}
\leq \norm{(\mtx{I}, \mtx{0})} \norm{\left( \tilde{\bar{\mtx{C}}} - \bar{\mtx{C}} \right)} \norm{(\mtx{0}, \mtx{I})^T}
\leq \lambda = \varepsilon \|\mtx{C}\|,
\end{align*}
which completes the proof.
\end{proof}

\section{Proof of Lemma~\ref{lem:iterative} in the main paper}
\label{sec:lem_iter}

\begin{proof}
As $\bar{\mtx{C}}$ is constructed based on a PSD kernel, $\bar{\mtx{C}}$ is also PSD.
Consequently $\mtx{M} = \mtx{S}^{T} \bar{\mtx{C}} \textbf{S}$ is PSD, and $\mtx{D}_M^{-1/2} (\mtx{M} + \gamma \mtx{I}) \mtx{D}_M^{-1/2}$ is positive definite, with all eigenvalues positive.
To prove the claim in the lemma we only need to show the eigenvalues of $\mtx{D}_M^{-1/2} (\mtx{M} + \gamma \mtx{I}) \mtx{D}_M^{-1/2}$ are bounded from above by $1$.
It is equivalent to prove that $\mtx{I} - \mtx{D}_M^{-1/2} (\mtx{M} + \gamma \mtx{I}) \mtx{D}_M^{-1/2}$ is PSD,
which can be induced by another statement that $\mtx{L} \defeq \mtx{D}_M - (\mtx{M} + \gamma \mtx{I})$ is PSD.

The proof of the statement above is similar to the proof of the well-known conclusion that graph Laplacian matrix is PSD.
For simplicity we denote $\mtx{W} \defeq \mtx{M} + \gamma \mtx{I}$, and given any vector $\mtx{x} \in \mb R^d$ we have
\begin{align*}
\mtx{x}^T \mtx{L} \mtx{x} &= \mtx{x}^T \mtx{D}_M \mtx{x} - \mtx{x}^T \mtx{W} \mtx{x}
= \sum_{i=1}^d (\mtx{D}_M)_{ii} \mtx{x}_i^2 - \sum_{i,j=1}^d \mtx{W}_{ij} \mtx{x}_i \mtx{x}_j \\
&= \frac12 \left( \sum_{i=1}^d (\mtx{D}_M)_{ii} \mtx{x}_i^2 - 2\sum_{i,j=1}^d \mtx{W}_{ij} \mtx{x}_i \mtx{x}_j + \sum_{j=1}^d (\mtx{D}_M)_{jj} \mtx{x}_j^2 \right) \\
&= \frac12 \sum_{i,j=1}^d \mtx{W}_{ij} (\mtx{x}_i - \mtx{x}_j)^2 \geq 0,
\end{align*}
where the last equation holds due to the fact that $(\mtx{D}_M)_{ii} = \sum_{j=1}^d \mtx{W}_{ij}$.

Combining the pieces above we can conclude that $\|\mtx{I} - \mtx{D}_M^{-1/2} (\mtx{M} + \gamma \mtx{I}) \mtx{D}_M^{-1/2}\| < 1$.
\end{proof}

\section{Additional discussions about the stability in model training}
\label{sec:exp_stability}

For our argument about stability, we mainly refer to the paper~\citep{DBLP:conf/emnlp/LiuLGCH20}, 
which identifies that the amplification of small parameter perturbations in the self-attention module is the root cause of training instability. 
We take kernelized attention as mitigation since it contains an automatic normalization. 
We have empirically used Figure~\ref{fig:devacc} and Figure~\ref{fig:devloss} in Appendix~\ref{sec:dev_loss} to support our claim. 

For further analysis we conduct a toy experiment adapting from Figure~4 in the aforementioned paper~\citep{DBLP:conf/emnlp/LiuLGCH20}.
We aim to show that in kernelized attention (and Skyformer) the output changes $f(x, W^*) - f(x, W)$ for parameter changes $W^* - W$ is smaller than in self-attention (and its approximation Nystr\"omformer). 
This concept involved is somewhat similar to condition number and below we will formalize it as ``instability score".

We show a table of the averaged ratios between the instability scores of kernelized attention (we also add Skyformer and Nystr\"omformer for reference) and self-attention to conclude our statement about stability. 
A ratio smaller than $1$ means higher stability compared to self-attention. 
We follow all the settings in Table~\ref{table:lra_acc} in the main paper except here we only update the model for 20 steps (we limit the number of steps as suggested by \citet{DBLP:conf/emnlp/LiuLGCH20} to make the results of the same step comparable among different models). 
In step $i$ for each model we compute the instability score $\tau_i = \frac{\|f(x_i, W_i) - f(x_i, W_{i-1})\|^2_F}{\|W_i - W_{i-1}\|^2_F}, i=1, \cdots, 20$, 
where $f()$ gives the embedding after two layers, $x_i$ is the $i$-th input sequence batch, $W_0$ represents the initial parameters, and $W_i$ represents the parameters after step $i$. 
In each step we compute the ratio of a certain method’s $\tau_i$ to the $\tau_i$ of self-attention, and finally average the $20$ ratios in Table~\ref{table:ratio_instability} in the appendix. 

As we can observe, both kernelized attention and Skyformer consistently have a lower instability score than self-attention, 
while the instability score of Nystr\"omformer, an approximation to self-attention, fluctuates around $1$ in all the tasks.
The results support our claim that the proposed kernelized attention can improve stability.

\begin{table}[t]
\caption{Ratios of instability score on LRA benchmark.}
\label{table:ratio_instability}
\centering
\begin{tabular}{lccccc}
\toprule
Model & Text    & ListOps & Retrieval & Pathfinder &  Image \\
\midrule
Nystr\"omformer         & 1.03 & 1.01 &	0.97 & 	0.99 & 1.02	\\
Kernelized Attention    & 0.83 & 0.77 & 0.64 & 	0.74 & 0.62	\\
Skyformer               & 0.81 & 0.79 &	0.64 & 	0.79 & 0.65 \\
\bottomrule
\end{tabular}
\end{table}

\end{document}